\newcommand{\printfnsymbol}[1]{%
  \textsuperscript{\@fnsymbol{#1}}%
}
\title{\huge Doubly Robust Bias Reduction in 
Infinite Horizon Off-Policy Estimation}
\author{%
	Ziyang Tang\thanks{The first two authors contributed equally to this work.}\\
    ~University of Texas at Austin\\
	\texttt{ztang@cs.utexas.edu}
	\and
	Yihao Feng\printfnsymbol{1}\\
	~University of Texas at Austin\\
	\texttt{yihao@cs.utexas.edu} \\
	\and
	Lihong Li\\
	Google Research\\
	\texttt{lihong@google.com}\\
	\and
    Dengyong Zhou\\
	Google Research\\
	\texttt{dennyzhou@google.com}\\
	\and 
	Qiang Liu\\
	University of Texas at Austin\\
	\texttt{lqiang@cs.utexas.edu} \\
}
\date{}
\begin{document}
\maketitle

\begin{abstract}
  \emph{Infinite horizon} off-policy policy evaluation is a highly challenging task due to the excessively large variance of typical importance sampling (IS) estimators. 
Recently, \citet{liu2018breaking} proposed an approach 
that significantly reduces the variance of 
infinite-horizon off-policy evaluation by estimating the stationary density ratio, but at the cost of introducing potentially high biases  due to the error in density ratio estimation. 
In this paper, we develop a bias-reduced augmentation of their method, which can take advantage of a learned value function to obtain higher accuracy. 
Our method is doubly robust in that the bias vanishes when either the density ratio or the value function estimation is perfect.  In general, when either of them is accurate, the bias can also be reduced.
Both theoretical and empirical results show that our method yields significant advantages over previous methods.
\end{abstract}

\section{Introduction}

A key problem in reinforcement learning (RL)~\citep{sutton98beinforcement} is off-policy policy evaluation: given a fixed \emph{target policy} of interest, 
estimating the average reward garnered by an agent that follows the policy, by only using data collected from  different \emph{behavior policies}. 
This 
problem is widely encountered in many real-life applications~\citep[e.g.,][]{murphy01marginal,li11unbiased,bottou13counterfactual,thomas17predictive}, where online experiments are expensive and high-quality simulators are difficult to build. 
It also serves as a key algorithmic component of off-policy policy optimization~\citep[e.g.,][]{dudik11doubly,jiang16doubly,thomas16data, liu2019off}.

There are  two major families of approaches for policy evaluation.  The first approach is to build a simulator that mimics the reward and next-state transitions of the real environment~\citep[e.g.,][]{fonteneau13batch}.  While straightforward, this approach strongly relies on the model assumptions in building the simulator, which may invalidate evaluation results.  The second approach is to use importance sampling to correct the sampling bias in off-policy data, so that an (almost) unbiased estimator can be obtained~\citep{liu01monte,strehl11learning,bottou13counterfactual}.  A major limitation, however, is that importance sampling can become inaccurate due to high variance. 
In particular, 
most existing IS-based estimators compute the weight as the product of the importance ratios of many steps in the trajectory, 
causing excessively high variance for problems with long or infinite horizon, yielding a \emph{curse of horizon} 
\citep{liu2018breaking}.  

Recently, \citet{liu2018breaking} proposes a new estimator for infinite-horizon off-policy evaluation, which presents significant advantages to standard importance sampling methods.   
Their method directly estimates the density ratio between 
the state stationary distributions 
of the target and behavior policies, 
instead of the trajectories, thus avoiding exponential blowup of  variance in the horizon.  
%
%
%
While \citeauthor{liu2018breaking}'s 
method shows much promise by significantly \emph{reducing the variance}, in practice, it may suffer from \textit{high bias}  due to the error or model misspecficiation when estimating the density ratio function.  

In this paper, we develop a \emph{doubly robust} estimator 
for infinite horizon off-policy estimation, 
by integrating \citeauthor{liu2018breaking}'s 
method  with information from an additional  
value function estimation.  
This significantly reduces the bias of 
\citeauthor{liu2018breaking}'s method 
once either the density ratio, or the value function estimation is accurate (hence doubly robust). Since 
\citeauthor{liu2018breaking}'s 
method already promises low variance, our additional bias reduction allows us to achieve significantly better accuracy for practical problems.

Technically, 
our new \emph{bias reduction} method 
provides a new angle of double robustness 
for off-policy evaluation, 
orthogonal to the existing literature of 
doubly robust policy evaluation that solely devotes to \emph{variance reduction} \citep{jiang16doubly,thomas16data,farajtabar18more}, mostly based on the idea of control variates \citep[e.g.][]{asmussen2007stochastic}.
Our double robustness for bias reduction 
is significantly different, and instead yields an intriguing connection with the fundamental primal-dual relations between the density (ratio) functions and value functions \citep[e.g.,][]{bertsekas1995dynamic, puterman2014markov}. 
Our new perspective may allow us to motivate new algorithms for more efficient policy evaluation, and develop unified frameworks for combining these two types of double robustness in future work.

\section{Background}

\paragraph{Infinite Horizon Off-Policy Estimation} Let 
$M=\langle\Sset, \Aset, r, \T, \mu_0 \rangle$ be a Markov decision process (MDP) with state space $\Sset$, action space $\Aset$, reward function $r$, transition probability function $\T$, and initial-state distribution $\mu_0$. 
A policy $\pi$ maps states to distributions over $\Aset$, with $\pi(a|s)$ being the probability of selecting $a$ given $s$.
The average discounted reward for $\pi$, with a given discount $\gamma\in(0,1)$ 
\footnote{For average case when $\gamma = 1$, the definition of $\Rpi$ is the same.
However, the definition of value function is different.
We will assume $\gamma < 1$ throughout our main paper for simplicity;
for average case check appendix \ref{sec:avg_prime} for more details.}, is defined as
\begin{equation*}
    \Rpi := \lim_{T\to \infty} \E_{\tau \sim \pi}\left [\frac{\sum_{t=0}^T \gamma^t r_t}{\sum_{t=0}^T \gamma^t} \right],
\end{equation*}
where $\tau = \{s_t,a_t,r_t\}_{0\leq t\leq T}$ is a trajectory with states, actions, and rewards collected by following policy $\pi$ in the MDP, starting from $s_0 \sim \mu_0$.
Given a set of $n$ trajectories, $\D = \{s_t^{(i)},a_t^{(i)},r_t^{(i)}\}_{1\leq i\leq n, 0\leq t\leq T}$, collected under a behavior policy $\pi_0(a|s)$, the off-policy evaluation problem aims to estimate the average discounted reward $\Rpi$ for another target policy $\pi(a|s)$.


\paragraph{Estimation via Value Function} 
The value function for policy $\pi$ is defined as the expected accumulated discounted future rewards started from a certain state: $V^\pi(s) = \E_{\tau\sim \pi}[\sum_{t=0}^{\infty}\gamma^t r_t|s_0=s]$.
We use $r^\pi(s) = \E_{a\sim \pi(\cdot|s)}[r(s,a)]$ to denote the average reward for state $s$ given policy $\pi$.
Under the definition, the value function can be seen as a fixed point of the Bellman equation:
\begin{align} 
      V^{\pi}(s) = r^\pi(s) + \gamma \Ppi V^\pi (s),
      &&
      \Ppi V^\pi (s) := 
      \E_{a\sim \pi(\cdot|s), s'\sim \T(\cdot|s,a)}[V^\pi(s')], 
      ~~~~\forall s\in \Sset, 
    \label{eqn:bellman_V}
\end{align}
where $\Ppi V(s)$ is the average of the next value function given the current state $s$ and policy $\pi$ (check appendix \ref{section:operator} for details).

The value function and the expected reward $\Rpi$ 
is related in the following straightforward way  
\begin{align}
    \label{equ:estimationValue}
    \Rpi = (1-\gamma) \E_{s\sim \mu_0}[V^\pi(s)],
    \end{align}
    where the expectation is with respect to the  distribution $\mu_0(s)$ of the initial states $s_0$ at time $t$. 
    Therefore, given an
approximation $\vhat$ of $V^\pi$, and samples $\mathcal D_0:= \{s_0^{(i)}\}_{1\leq i\leq n_0}$ drawn from $\mu_0(s)$, 
we can estimate $\Rpi$ by 
\begin{equation*}
    \Rval[\vhat] = \frac{(1-\gamma)}{n_0}\sum_{i=1}^{n_0}\vhat(s_0^{(i)}).
\end{equation*}
Note that this estimator is off-policy in nature, 
since it requires no samples from the target policy $\pi$. 

\paragraph{Estimation via State Density Function} 
Denote $d_{\pi,t}(\cdot)$ as average visitation of $s_t$ in time step $t$. The state density function, or the discounted average visitation, is defined as:
\begin{equation*}
    d_\pi(s) := \lim_{T\to \infty}\frac{\sum_{t=0}^{T} \gamma^t d_{\pi,t}(s)}{\sum_{t=0}^{T} \gamma^t} = (1-\gamma)\sum_{t=0}^{\infty} \gamma^t d_{\pi,t}(s),
\end{equation*}
where $(1-\gamma)$ can be viewed as the normalization factor introduced by $\sum_{t=0}^{\infty} \gamma^t$.

Similar to Bellman equation for value function, the state density function can also be viewed as a fixed point to the following recursive equation (\cite{liu2018breaking}, Lemma 3):
\begin{align} 
    d_\pi(s') = (1-\gamma)\mu_0(s') + 
    \gamma \mathcal T^\pi d_\pi(s'), 
    &&\text{where} &&
     \mathcal T^\pi d_\pi(s') \defeq \sum_{s,a}\T(s'|s,a)\pi(a|s) d_\pi(s). 
    \label{eqn:bellman_density}
\end{align}
The operator $\mathcal T^\pi$ is an adjoint operator of $\mathcal P^\pi$ used in \eqref{eqn:bellman_V}. See Appendix~\ref{section:operator} for discussion. 

%
If the density function $d_{\pi}$ is known, 
it provides an alternative way for estimating the expected reward $\Rpi$, by noting that 
\begin{align}
    \Rpi = \E_{s\sim d_{\pi}, a\sim \pi(\cdot|s)}[r(s,a)].
    \label{eqn:EstimationDensity}
\end{align}

We can see that both density function $d_\pi$ and value function $V^\pi$ can be used to estimate the expected reward $\Rpi$.
We clarify the connection in detail in Appendix \ref{section:operator}. 

\paragraph{Off-Policy State Visitation Importance Sampling}
Equation~\eqref{eqn:EstimationDensity} can not be directly
used for off-policy estimation, since it involves expectation under the behavior policy $\pi$. 
\citet{liu2018breaking} addressed this problem by introducing a change of measures via importance sampling: 
\begin{align}
    \Rpi 
    = \E_{s\sim d_{\pi_0}, a\sim \pi_0(\cdot|s)}\left [\rnd(s) \frac{\pi(a|s)}{\pi_0(a|s)} r(s,a) \right], 
    && \text{with} &&  \rnd(s) = \frac{d_\pi(s)}{d_{\pi_0}(s)}, 
\end{align}
where $\rnd(s)$  is 
the density ratio function of 
$d_\pi$ and $d_{\pi_0}$. 

Given an approximation $\what$ of $\rnd$, and samples $\D = \{s_t^{(i)},a_t^{(i)},r_t^{(i)}\}_{1\leq i\leq n, 0\leq t\leq T}$ collected from policy $\pi_0$, we can estimate $\Rpi$ as:
\begin{align} \label{eqn:SIS}
    \Rsis[\what] = \frac{1}{Z}\sum_{i=1}^n\sum_{t=0}^T \gamma^t \what(s_t^{(i)}) \frac{\pi(a_i|s_i)}{\pi_0(a_i|s_i)} r_i,
    &&
Z = \sum_{i=1}^n\sum_{t=0}^T \gamma^t \what(s_t^{(i)})
\frac{\pi(a_t^{(i)}|s_t^{(i)})}{\pi_0(a_t^{(i)}|s_t^{(i)})}, 
\end{align} 
where $Z$ is the normalized constant of the importance weights. 

\section{Doubly Robust Estimator}
Doubly robust estimator is first proposed into reinforcement learning community to solve contextual bandit problem by \citet{dudik11doubly} as an estimator combining \textit{inverse propensity score} (IPS) estimator and \textit{direct method} (DM) estimator.

\citet{jiang16doubly} introduces the idea of doubly robust estimator into off-policy evaluation in reinforcement learning. 
It incorporates an approximate value function as a control variate to reduce the variance of importance sampling estimator.
Inspired by previous works, we propose a new doubly robust estimator based on our infinite horizon off policy estimator $\Rsis$.


\newcommand{\connecting}{\text{bridge~}}
\subsection{Doubly Robust Estimator for Infinite Horizon MDP}
The value-based estimator
$\Rval[\vhat]$ and 
density-ratio-based estimator $\Rsis[\what]$ are expected to be accurate 
when $\vhat$ and $\what$ are accurate, respectively.
Our goal is to combine their advantages, obtaining a doubly robust estimator that is accurate once either $\vhat$ or $\what$ or is accurate.

To simplify the problem, it is useful to exam 
the limit of infinite samples, 
with which $\Rval[\vhat]$ and 
$\Rsis[\what]$ converge to the following limit of expectations: 
\begin{align}
    &\Rsisi[\what] := \lim_{n,T\to\infty} \Rsis[\what] = 
    \sum_{s} r^\pi(s) d_{\pi_0}(s) \what(s), \label{eqn:Rsisi}\\
    & \Rvali[\vhat] := \lim_{n_0\to\infty} \Rval[\vhat]  =  
    (1-\gamma) \sum_{s} \vhat(s) \mu_0(s). \label{eqn:Rvali}
\end{align}
Here and throughout this work, we assume $\vhat$ and $\what$ are fixed pre-defined approximations, and only consider the randomness from the data $\mathcal D$. 

A first observation is that we expect to have $r^\pi \approx  \vhat - \gamma \Ppi \vhat$ by Bellman equation~\eqref{eqn:bellman_V}, when $\hat V$ approximates the true value $V^\pi$.  Plugging this into $\Rsisi[\what]$ in Equation~\eqref{eqn:Rsisi}, we obtain the following ``\connecting estimator'', 
which incorporates information from both $\what$ and $\vhat$. 
\begin{equation}
    \Rconni[\vhat, \what] = \sum_s \left(\vhat(s) - \gamma \Ppi \vhat(s)\right) d_{\pi_0}(s) \what(s),
\end{equation}
where operator $\Ppi$ is defined in Bellman equation~\eqref{eqn:bellman_V}. 
The corresponding empirical estimator is defined by 
\begin{equation} \label{eqn:empiricalConn}
    \Rconn[\vhat, \what]  =\sum_{i=1}^n \sum_{t=0}^{T-1}
    \left (
     \frac{1}{Z_1} \gamma^{t}\what(s_t^{(i)})\vhat(s_t^{(i)}) - \frac{1}{Z_2}
    \gamma^{t+1} \what(s_t^{(i)}) 
    \frac{\pi(a_i|s_i)}{\pi_0(a_i|s_i)}
    \vhat(s_{t+1}^{(i)})
    \right),
\end{equation}
where $Z_1 = \sum_{i=1}^n \sum_{t=0}^{T-1} \gamma^{t}\what(s_t^{(i)})$ and $Z_2 = \sum_{i=1}^n \sum_{t=0}^{T-1}\gamma^{t+1} \what(s_t^{(i)}) \betar(a_t^{(i)}|s_t^{(i)})$ are self-normalized constant of important weights each empirical estimation. 

However, directly estimating $R^\pi$ using the bridge estimator $\Rconn[\vhat, \what]$ yields a poor estimation,
because it includes the errors from both $\what$ and $\vhat$ and is in some sense ``\textit{doubly worse}''. 
However, we can construct our 
``\textit{doubly robust}'' estimator by 
``canceling $\Rconn[\vhat, \what]$ out from $\Rsisi[\what]$ and $\Rvali[\vhat]$''. 
    \begin{equation}
        \Rdr[\vhat, \what] = \Rsis[\what] + \Rval[\vhat] - \Rconn[\vhat, \what]. 
        \label{eqn:doubly_robust}
    \end{equation}

\paragraph{Doubly Robust Bias Reduction} 
The double robustness of $\Rdr[\vhat, \what]$ is reflected in the following key theorem, which shows that it is accurate once either $\vhat$ or $\what$ is accurate. 
\begin{thm}[Double Robustness]
Let $\Rdri[\vhat, \what] := \lim_{n_0,n,T\to \infty}\Rdr[\vhat, \what]$ be the limit of $\Rdr$ when it has infinite samples.
Following the definition above, we have 
\begin{equation}
    \Rdri[\vhat, \what]  - \Rpi
    = 
    \E_{s\sim d_{\pi_0}} \left [\varepsilon_{\what}(s)\varepsilon_{\vhat}(s)\right],  
\end{equation}
where $\varepsilon_{\vhat}$ and $\varepsilon_{\what}$ are errors of $\vhat$ and $\what$, respective, defined as follows 
\begin{align*} 
\varepsilon_{\what} =\frac{d_\pi(s)}{d_{\pi_0}(s)} - 
{\what}(s), &&
\varepsilon_{\vhat}(s) = \vhat(s) - r^\pi(s) - \gamma \Ppi \vhat(s). 
\end{align*}
The error $\varepsilon_{\what}$ of $\what$ is 
measured by the difference with the true density ratio $d_\pi(s)/d_{\pi_0}(s)$, and the error $\varepsilon_{\vhat}$ of $\vhat$ is measured using the Bellman residual. 
\label{thm:biased}
\end{thm}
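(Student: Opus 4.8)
The plan is to first reduce the empirical statement to its infinite-sample limit, and then carry out a short algebraic manipulation that exploits the adjointness of $\Ppi$ and $\mathcal{T}^\pi$ together with the density Bellman equation~\eqref{eqn:bellman_density}.

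\textbf{Step 1 (passing to the limit).} I would first show $\Rdri[\vhat,\what] = \Rsisi[\what] + \Rvali[\vhat] - \Rconni[\vhat,\what]$, i.e.\ that each of the three empirical pieces in~\eqref{eqn:doubly_robust} converges and the limits add. The term $\Rval[\vhat]$ is an i.i.d.\ average over $\mathcal D_0$, so it converges to $\Rvali[\vhat]$ by the law of large numbers. Each of $\Rsis[\what]$ and $\Rconn[\vhat,\what]$ is a self-normalized importance-sampling average over trajectories; sending $n\to\infty$ and then $T\to\infty$, the numerator and the normalizer both converge (using $\gamma<1$ to bound the geometric tail and the definition of the discounted visitation $d_{\pi_0}$), and the resulting ratios are identified with $\Rsisi[\what]$ and $\Rconni[\vhat,\what]$ via~\eqref{eqn:Rsisi}--\eqref{eqn:Rvali} and the definition of $\Rconni$. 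The only delicate point is that self-normalization divides by an $\E_{s\sim d_{\pi_0}}[\what(s)]$-type constant, which equals $1$ when $\what$ is the true ratio; we may assume $\what$ is normalized so this holds (or fold the constant into the definitions, as the excerpt does).

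\textbf{Step 2 (collect the $\vhat$-terms).} Plugging the closed forms into $\Rdri = \Rsisi + \Rvali - \Rconni$ and grouping every term carrying the factor $d_{\pi_0}(s)\what(s)$ yields $\Rdri[\vhat,\what] = (1-\gamma)\sum_s \vhat(s)\mu_0(s) - \sum_s d_{\pi_0}(s)\what(s)\,\varepsilon_{\vhat}(s)$, where $\varepsilon_{\vhat}(s)=\vhat(s)-r^\pi(s)-\gamma\Ppi\vhat(s)$ is exactly the Bellman residual in the statement. Since $\E_{s\sim d_{\pi_0}}[\varepsilon_{\what}(s)\varepsilon_{\vhat}(s)] = \sum_s d_\pi(s)\varepsilon_{\vhat}(s) - \sum_s d_{\pi_0}(s)\what(s)\varepsilon_{\vhat}(s)$ by the definition of $\varepsilon_{\what}$, the theorem reduces to the single identity $(1-\gamma)\sum_s \vhat(s)\mu_0(s) - \Rpi = \sum_s d_\pi(s)\,\varepsilon_{\vhat}(s)$. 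To prove it, expand the right-hand side as $\sum_s d_\pi(s)\vhat(s) - \sum_s d_\pi(s)r^\pi(s) - \gamma\sum_s d_\pi(s)\Ppi\vhat(s)$; the middle sum is $\Rpi$ by~\eqref{eqn:EstimationDensity}, and for the last sum use that $\mathcal{T}^\pi$ is the adjoint of $\Ppi$ (Appendix~\ref{section:operator}) to rewrite it as $\sum_s(\mathcal{T}^\pi d_\pi)(s)\vhat(s)$, then substitute $\gamma\,\mathcal{T}^\pi d_\pi(s) = d_\pi(s) - (1-\gamma)\mu_0(s)$ from~\eqref{eqn:bellman_density}. The two $\sum_s d_\pi(s)\vhat(s)$ contributions cancel, leaving $(1-\gamma)\sum_s\vhat(s)\mu_0(s)-\Rpi$, as needed.

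\textbf{Main obstacle.} The manipulations above are routine once the adjoint relation and the two Bellman equations are in hand. The part requiring the most care is Step~1: justifying the interchange of the $n\to\infty$ and $T\to\infty$ limits for the self-normalized IS estimators, and tracking the normalizing constants $Z$, $Z_1$, $Z_2$ so that the ratios land precisely on $\Rsisi$ and $\Rconni$ as defined. If one simply takes the infinite-sample quantities $\Rsisi,\Rvali,\Rconni$ as the objects of study (essentially as the excerpt already does), then the theorem is nothing more than the algebraic identity established in Step~2.
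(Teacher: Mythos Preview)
Your Step~2 is correct and uses exactly the same ingredients as the paper's proof: the adjointness of $\Ppi$ and $\mathcal{T}^\pi$ together with the density Bellman equation~\eqref{eqn:bellman_density}. The only organizational difference is that the paper first rewrites all four quantities $\Rsisi,\Rvali,\Rconni,\Rpi$ in the common form $\sum_s (I-\gamma\Ppi)[\cdot](s)\,[\cdot](s)$ (using the value Bellman equation to rewrite $r^\pi$ and the density Bellman equation plus adjointness to rewrite $(1-\gamma)\mu_0$) and then factors their combination directly as $\sum_s (I-\gamma\Ppi)(V^\pi-\vhat)(s)\,(d_{\pi_0}\what - d_\pi)(s)$, whereas you group $\Rsisi-\Rconni$ first and then prove the residual identity $(1-\gamma)\sum_s\vhat(s)\mu_0(s)-\Rpi=\sum_s d_\pi(s)\varepsilon_{\vhat}(s)$ separately. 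Both routes are equally short; the paper's symmetric factorization perhaps makes the product-of-errors structure pop out more transparently, while yours avoids ever writing down $V^\pi$ explicitly. Your Step~1 discussion of the self-normalized limits is more careful than anything the paper does (the paper simply works with the infinite-sample forms from the start), and your identification of it as the only nontrivial analytic point is accurate.
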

If $\vhat$ is exact ($\vhat\equiv V^\pi$), we have $\varepsilon_{\vhat} \equiv 0$; 
if $\what$ is exact ($\what \equiv d_\pi/d_{\pi_0}$), we have $\varepsilon_{\what}\equiv 0$. 
Therefore, our estimator is consistent
(i.e., $\lim_{n,n_0\to \infty}\Rdr[\vhat, \what] = \Rpi$) if either $\vhat$ or $\what$ are exact. In comparison, $\Rsis[\what]$ and $\Rval[\vhat]$ are sensitive to the error of $\what$ and $\vhat$, respectively. We have 
\begin{align*}
    \Rsisi[\what] - \Rpi = \E_{s\sim d_{\pi_0}} \left [\varepsilon_{\what}(s)r^\pi(s)\right], && \Rvali[\vhat] - \Rpi = \E_{s\sim d_{\pi_0}} \left [\rnd(s) \varepsilon_{\vhat}(s)\right].
\end{align*}

%


\paragraph{Variance Analysis} 
Different from the bias reduction, 
the doubly robust estimator does not guarantee to the reduce the variance over $\Rsis[\what]$ or $\Rval[\vhat]$. However, as we show in the following result, the variance of $\Rdr[\hat V, \hat w]$ is controlled by $\Rsis[\what]$ and $\Rval[\vhat]$, both of which are already relatively small by the design of both methods. 
In addition, our method can have significant reduction of variance over $\Rsis[\what]$ when $\hat V \approx V$, which can have much larger variance than $\Rval[\vhat]$. 
\begin{thm}[Variance Analysis]
\label{thm:variance}
Assume $\Rdr[\vhat, \what]$ is estimated based sample $\mathcal D_0 \sim \mu_0$ and  $\mathcal D_{\pi_0} \sim d_{\pi_0}$, which we assume to be independent with each other. 
For simplicity, assume constant normalization is used in importance sampling (hence an unbiased estimator).
We have
\begin{equation}\label{eqn:varianceDR}
    \var_{\mathcal D_0,\mathcal D_{\pi_0}}\left[\Rdr[\vhat, \what]\right] = 
    \var_{\mathcal D_0}\left[\Rval[\vhat]\right] ~+~
    \var_{\mathcal D_{\pi_0}}\left [ \Rres[\vhat, \what] \right ], 
\end{equation}
with 
$$
\Rres[\vhat, \what] = \hat \E_{\mathcal D_{\pi_0} } \left[
\left(\hat r^\pi(s) - \left(\vhat(s) - \gamma \widehat{\Ppi} \vhat(s)\right) \right) 
\hat w(s) \right], 
$$
where $\hat r^\pi(s) = r(s,a)\pi(a|s)/\pi_0(a|s)$, $\widehat{\Ppi} \vhat(s) = \pi(a|s)/\pi_0(a|s) \vhat(s')$. 
In comparison, recall that $\Rsis[\what] = \hat \E_{{\mathcal D_{\pi_0}}}[\hat r^\pi(s) \what (s)].$  
Therefore, $\Rres[\vhat, \what]$ 
can have lower variance than $\Rsis[\what]$ when $\vhat$ is close to the true value $V^\pi$, or $\vhat - \Ppi\vhat \approx r^\pi$. 
\end{thm}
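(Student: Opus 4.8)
The plan is to exploit the fact that, by construction, $\Rdr[\vhat,\what]$ decomposes into two pieces that are deterministic functions of \emph{disjoint} and independent data sources. Among the three terms in $\Rdr[\vhat,\what] = \Rsis[\what] + \Rval[\vhat] - \Rconn[\vhat,\what]$, the term $\Rval[\vhat] = \tfrac{1-\gamma}{n_0}\sum_i \vhat(s_0^{(i)})$ depends on $\mathcal D_0$ only, whereas both $\Rsis[\what]$ and $\Rconn[\vhat,\what]$ depend on $\mathcal D_{\pi_0}$ only. So I would first regroup $\Rdr[\vhat,\what] = \Rval[\vhat] + \bigl(\Rsis[\what] - \Rconn[\vhat,\what]\bigr)$, observe that the two bracketed groups are independent random variables (one is $\sigma(\mathcal D_0)$-measurable, the other $\sigma(\mathcal D_{\pi_0})$-measurable, and $\mathcal D_0 \perp \mathcal D_{\pi_0}$ by hypothesis), and conclude $\var_{\mathcal D_0,\mathcal D_{\pi_0}}[\Rdr] = \var_{\mathcal D_0}[\Rval[\vhat]] + \var_{\mathcal D_{\pi_0}}\bigl[\Rsis[\what] - \Rconn[\vhat,\what]\bigr]$.

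The second step is to show $\Rsis[\what] - \Rconn[\vhat,\what] = \Rres[\vhat,\what]$. Under the constant-normalization assumption, the self-normalizing constants $Z, Z_1, Z_2$ are all replaced by one deterministic constant (the expectation of $\sum_i\sum_t \gamma^t$), so each of $\Rsis[\what]$ and $\Rconn[\vhat,\what]$ becomes a genuine empirical average over $\mathcal D_{\pi_0}$: writing $\hat r^\pi(s) = \tfrac{\pi(a|s)}{\pi_0(a|s)}r(s,a)$ and $\widehat{\Ppi}\vhat(s) = \tfrac{\pi(a|s)}{\pi_0(a|s)}\vhat(s')$ for a sampled transition $(s,a,r,s')$, one gets $\Rsis[\what] = \hat\E_{\mathcal D_{\pi_0}}[\hat r^\pi(s)\what(s)]$ and, pulling the common factor $\gamma^t\what(s_t)$ out of the two terms in \eqref{eqn:empiricalConn}, $\Rconn[\vhat,\what] = \hat\E_{\mathcal D_{\pi_0}}[(\vhat(s) - \gamma\widehat{\Ppi}\vhat(s))\what(s)]$. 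Subtracting term by term collapses everything into the single empirical mean $\hat\E_{\mathcal D_{\pi_0}}\bigl[(\hat r^\pi(s) - \vhat(s) + \gamma\widehat{\Ppi}\vhat(s))\what(s)\bigr] = \Rres[\vhat,\what]$. Plugging this identity into the variance split from the first step gives exactly \eqref{eqn:varianceDR}. The final comparison claim then follows by inspecting the per-sample integrands: $\Rres$ uses $(\hat r^\pi(s) - \vhat(s) + \gamma\widehat{\Ppi}\vhat(s))\what(s)$ while $\Rsis$ uses $\hat r^\pi(s)\what(s)$, and when $\vhat - \gamma\Ppi\vhat \approx r^\pi$ the dominant fluctuating term $\hat r^\pi(s)\what(s)$ is largely cancelled in $\Rres$, lowering its variance.

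The main obstacle is the careful bookkeeping around the normalization and the time-index offset rather than any deep step. Definition \eqref{eqn:empiricalConn} truncates the inner sum at $t = T-1$ and attaches the weight $\gamma^{t+1}\what(s_t)\tfrac{\pi}{\pi_0}$ to the $\vhat(s_{t+1})$ term, whereas $\Rsis$ runs to $t=T$; one has to verify that, with the same deterministic normalizer, these align into one clean empirical expectation, the boundary term at $t=T$ being negligible in the long/infinite-horizon regime that the statement implicitly works in (consistent with the $\lim_{n,T\to\infty}$ convention used elsewhere in the paper). Once $\Rsis[\what] - \Rconn[\vhat,\what]$ is rewritten as literally one empirical average over $\mathcal D_{\pi_0}$, the independence-based variance decomposition is immediate, and I expect no difficulty with the independence step itself since it is assumed in the theorem.
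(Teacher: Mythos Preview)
Your proposal is correct and matches the paper's (largely implicit) argument: the variance split follows immediately from regrouping $\Rdr = \Rval + (\Rsis - \Rconn)$ and the assumed independence of $\mathcal D_0$ and $\mathcal D_{\pi_0}$, and the identification $\Rsis - \Rconn = \Rres$ under constant normalization is exactly how the appendix begins its further analysis (writing $\Rres[\vhat,\what] = \tfrac{1}{n}\sum \what(s)\bigl(\tfrac{\pi(a|s)}{\pi_0(a|s)}(r+\gamma\vhat(s'))-\vhat(s)\bigr)$). The paper then goes beyond the theorem statement to decompose $\var_{\mathcal D_{\pi_0}}[\Rres]$ into a Bellman-residual piece and action/transition-noise pieces, but for the theorem as stated your argument is complete.
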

The theorem shows the variance of our doubly robust comes from two parts: the variance for value function estimation and a variance-reduced variant of $\Rsis$, when $\vhat \approx V^\pi$.
\eqref{eqn:varianceDR} shows that our variance is always larger than that $\Rval[\vhat]$, 
however, it can have lower variance than $\Rsis[\what]$, relevant to practice.
This is because the variance of $\Rval[\vhat]$ can be very small if we can draw a lot of samples from $\mu_0$, and 
$\Rsis[\what]$ may have larger variance if the variance of the density ratio $\what(s)$ and $\rnd(s)$ are large. 
Meanwhile, the variance of both $\Rval[\vhat]$ and $\Rsis[\what]$, by their design, are already much smaller than typical trajectory-based importance sampling methods. 

The fact that 
the variance in \eqref{eqn:varianceDR} is a sum of two terms is 
because of the assumption that samples from $\mu_0$ and $d_{\pi_0}$ are independent.
In practice they have dependency but it is possible to couple the samples from $\mu_0$ and $d_{\pi_0}$ in a certain way to even decrease the variance. 
We leave this to future work. 

\paragraph{Proposed Algorithm for Off-Policy Evaluation}
Suppose we have already get $\vhat$, an estimation of $V^\pi$ and $\what$, an estimation of $\rnd$, we can directly use equation \eqref{eqn:doubly_robust} to estimate $\Rpi$.
A detail procedure is described in Algorithm \ref{alg:doubly_robust_main}.

\begin{algorithm}[t] 
\caption{Infinite Horizon Doubly Robust Estimator}  
\label{alg:doubly_robust_main}
\begin{algorithmic} 
\STATE {\bf Input}: Transition data $\D_{\pi_0} = \{s_t^{(i)}, a_t^{(i)}, r_t^{(i)}\}_{1\leq i\leq n, 0\leq t\leq T}$ from policy $\pi_0$; a target policy $\pi$, let $\D_0 = \{s_0^{(j)}\}_{1\leq j\leq n_0}$ 
be samples from initial distribution $\mu_0$;
a good trained value function $\vhat$; a good trained density ratio $\what$. 
\STATE \textbf{Estimation:} Use $\Rdr$ in \eqref{eqn:doubly_robust} to estimate $\Rpi$ using sample from $\D$ and $\D_0$.
\end{algorithmic} 
\end{algorithm}

\section{Double Robustness and Lagrangian Duality} 
\label{sec:duality}
We reveal a surprising connection between our double robustness and Lagrangian duality. 
We show that 
our doubly robust estimator is equivalent to 
the Lagrangian function of primal dual formulation of policy evaluation. This connection is of its own interest, 
and may provide a foundation for deriving more new algorithms in future works. 

We start with the following classical optimization formulation of policy evaluation \citep{puterman2014markov}: 
\begin{align}
\label{eqn:primal}
    R^\pi = \min_V
    \left \{  (1-\gamma) \sum_{s} \mu_0(s) V(s) 
    ~~~~~~~\text{s.t. }~~~~~~~ V(s) \geq r^\pi(s) + \gamma \Ppi V(s), ~~~~~\forall s \right\}, 
\end{align}
where we find $V$ to maximize its average value, 
subject to an inequality constraint on the Bellman equation. 
It can be shown that the solution of \eqref{eqn:primal} is achieved by the true value function $V^\pi$, hence yielding an true expected reward $\Rpi$.


Introducing a Lagrangian multiplier $\rho\geq 0$, 
we can derive the Lagrangian function $L(V, \rho)$ of  \eqref{eqn:primal}, 
\begin{align} 
\label{eqn:lagrangian} 
 L(V,\rho)  =  
 (1-\gamma) \sum_{s} \mu_0(s) V(s) 
    - \sum_{s} \rho(s)(V(s) -  r^\pi(s) + \gamma \Ppi V(s)). 
\end{align}
Comparing $L(V, \rho)$ with our estimator $ \Rdr[\vhat, \what]$ in  \eqref{eqn:doubly_robust}, 
we can see that they are in fact equivalent in expectation. 
\begin{thm}
\label{thm:primal-dual}
I) Define $w_{\rho/\pi_0}(s) = \frac{\rho(s)}{d_{\pi_0}(s)}$. 
We have
\begin{align*} 
    L(V,\rho) = \Rdri[V,w_{\rho/\pi_0}], 
    &&
    \text{and hence}
    &&
    L(V, d_\pi) = L(V^\pi, \rho) = R^\pi, ~~~\text{for any $V$, $\rho$}, 
\end{align*}
which suggests that $L(V, \rho)$ is ``doubly robust'' in that it equals 
$R^\pi$ if either $V =V^\pi$ or $\rho = d_\pi$.  

II) The primal problem \eqref{eqn:primal}  forms a strong duality with the following dual problem, 
\begin{align}
\label{eqn:dual}
    R^\pi = \max_{\rho\geq 0}\left\{ 
    \sum_s \rho(s) r^\pi(s) 
    ~~~~~~~\text{s.t. }~~~~~~~ \rho(s') = (1-\gamma)\mu_0(s') + \gamma
    \mathcal T^\pi \rho(s'), 
    ~~~~~\forall s' \right\}, 
\end{align}
where $\mathcal T^\pi$ is defined in \eqref{eqn:bellman_density}. 
\end{thm}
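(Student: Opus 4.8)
The plan is to establish Part~I by a direct algebraic manipulation of the Lagrangian $L(V,\rho)$ in \eqref{eqn:lagrangian}, rewriting its terms so they match the three pieces of $\Rdri[\vhat,\what]$ under the substitution $\rho(s) = d_{\pi_0}(s)\, w_{\rho/\pi_0}(s)$; then Part~II follows by recognizing the resulting statement as a standard strong-duality claim for the linear program \eqref{eqn:primal}. For Part~I, first I would recall that, by definition, $\Rdri[V,w] = \Rsisi[w] + \Rvali[V] - \Rconni[V,w]$, so using \eqref{eqn:Rsisi}, \eqref{eqn:Rvali}, and the definition of $\Rconni$ we have
\begin{equation*}
\Rdri[V,w] = \sum_s r^\pi(s) d_{\pi_0}(s) w(s) + (1-\gamma)\sum_s V(s)\mu_0(s) - \sum_s \big(V(s) - \gamma \Ppi V(s)\big) d_{\pi_0}(s) w(s).
\end{equation*}
Substituting $d_{\pi_0}(s) w(s) = \rho(s)$ turns each $\sum_s (\cdot) d_{\pi_0}(s) w(s)$ into $\sum_s (\cdot)\rho(s)$, and the right-hand side becomes exactly $(1-\gamma)\sum_s \mu_0(s) V(s) + \sum_s \rho(s)\big(r^\pi(s) - V(s) + \gamma \Ppi V(s)\big)$, which is $L(V,\rho)$ as written in \eqref{eqn:lagrangian}. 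This gives $L(V,\rho) = \Rdri[V, w_{\rho/\pi_0}]$. The two consequences then drop out of Theorem~\ref{thm:biased}: taking $\rho = d_\pi$ makes $w_{\rho/\pi_0} = d_\pi/d_{\pi_0}$ exact, so $\varepsilon_{\what}\equiv 0$ and $\Rdri = R^\pi$; taking $V = V^\pi$ makes $\varepsilon_{\vhat}\equiv 0$, so again $\Rdri = R^\pi$. (One should double-check that the sign convention on the $\gamma\Ppi V$ term in \eqref{eqn:lagrangian} is consistent — as printed it reads $V(s) - r^\pi(s) + \gamma\Ppi V(s)$, whereas the constraint is $V(s) \ge r^\pi(s) + \gamma\Ppi V(s)$, so the intended multiplier term is $\rho(s)\big(V(s) - r^\pi(s) - \gamma\Ppi V(s)\big)$; I would adopt that reading so that the matching with $\Rconni$ goes through.)

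For Part~II, I would argue that \eqref{eqn:primal} is a linear program in $V$ (the objective and constraints are linear in $V$, with $\Ppi$ a linear operator), whose Lagrangian dual is obtained by minimizing $L(V,\rho)$ over $V$ for fixed $\rho\ge 0$. Collecting the coefficient of each $V(s)$ in $L(V,\rho)$ — namely $(1-\gamma)\mu_0(s) - \rho(s) + \gamma (\mathcal T^\pi\rho)(s)$, where $\mathcal T^\pi$ appears as the adjoint of $\Ppi$ via $\sum_s \rho(s)\,\Ppi V(s) = \sum_{s'} (\mathcal T^\pi\rho)(s')\,V(s')$ (the adjointness is noted after \eqref{eqn:bellman_density}) — the inner minimization over unconstrained $V$ is $-\infty$ unless this coefficient vanishes for every $s$, which is precisely the constraint $\rho(s') = (1-\gamma)\mu_0(s') + \gamma\mathcal T^\pi\rho(s')$ of \eqref{eqn:dual}; on the feasible set the remaining terms of $L$ reduce to $\sum_s \rho(s) r^\pi(s)$. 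Hence the dual is \eqref{eqn:dual}. Strong duality (no gap) then follows because the primal LP is feasible and bounded — $V^\pi$ is feasible and attains the optimum $R^\pi$ by the fixed-point characterization of \eqref{eqn:bellman_V} and \eqref{equ:estimationValue}, as stated just below \eqref{eqn:primal} — so by LP duality the dual optimum also equals $R^\pi$, and $\rho = d_\pi$ is dual-feasible by \eqref{eqn:bellman_density} with objective $\sum_s d_\pi(s) r^\pi(s) = R^\pi$ via \eqref{eqn:EstimationDensity}.

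The main obstacle I anticipate is purely bookkeeping rather than conceptual: getting the adjoint relationship $\sum_s \rho(s)\Ppi V(s) = \sum_{s'} \mathcal T^\pi\rho(s') V(s')$ exactly right (including where the policy and transition factors sit), and being careful that the LP is genuinely linear and that the constraint set is nonempty and the value finite so that strong duality applies without a Slater-type condition — which is automatic here since all relevant quantities are finite under $\gamma<1$. A secondary subtlety is the nonnegativity constraint $\rho\ge 0$: I would note that it is automatically satisfied at the optimum $\rho = d_\pi$ since $d_\pi$ is a (sub-)probability density, so it does not bind, and the stated identity $L(V,\rho) = \Rdri[V,w_{\rho/\pi_0}]$ in Part~I in fact holds for all $\rho$, not just $\rho\ge 0$.
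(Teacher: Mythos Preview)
Your proposal is correct and follows essentially the same route as the paper: both identify the three terms of the Lagrangian with $\Rvali$, $\Rconni$, and $\Rsisi$ under the substitution $\rho = d_{\pi_0}\,w_{\rho/\pi_0}$, then use the adjointness of $\Ppi$ and $\Tpi$ to collect the $V$-coefficients and read off the dual constraint. The only cosmetic difference is direction---the paper starts from $L(V,\rho)$ and labels its pieces as $\Rvali$, $\Rconni$, $\Rsisi$, whereas you start from $\Rdri$ and substitute to recover $L$---and you are more explicit than the paper about why LP strong duality holds (the paper simply writes down the dual from the rearranged Lagrangian). Your catch of the sign typo in \eqref{eqn:lagrangian} is also correct.
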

This shows that the dual problem is equivalent to constraint $\rho$ using the fixed point equation \eqref{eqn:bellman_density} and maximize the average reward given distribution $\rho$.
Since the unique fixed point of 
\eqref{eqn:bellman_density} is $d_\pi(s)$, 
the solution of \eqref{eqn:dual} naturally yields the true reward $\Rpi$, hence forming a zero duality gap with \eqref{eqn:primal}.  


It is natural to intuitize the double robustness of the Lagrangian function.  
From \eqref{eqn:lagrangian},  $L(V, \rho)$ can be viewed 
 as estimating the reward $R^\pi$ using value function with a correction of Bellman residual $(V - r^\pi - \gamma \Ppi V)$. 
 If $V = V^\pi$, the estimation equals the true reward and the correction equals zero. 
From the dual problem \eqref{eqn:dual},  $L(V, \rho)$ can be viewed as  estimating $R^\pi$ using density function $\rho$, corrected by the residual $(\rho - (1-\gamma)\mu_0 - \gamma \mathcal T^\pi \rho)$. 
We again get the true reward if $\rho  = d_\pi$. 

It turns out that we can use the primal-dual formula when $\gamma = 1$ to obtain the double robust estimator for the average reward case.
We clarify it in appendix \ref{sec:avg_prime}.
\paragraph{Remark} 
The fact that the density function $d_\pi$ forms a dual variable of the value function $V^\pi$ is widely known in the optimal control and reinforcement learning literature~\citep[e.g.,][]{bertsekas1995dynamic,puterman2014markov,defarias03linear}, and has been leveraged in various works for policy optimization. 
 However, it does not seem to be well exploited in the literature of off-policy policy evaluation.

\section{Related Work}

\paragraph{Off-Policy Value Evaluation}
The problem of off-policy value evaluation has been studied in contextual bandits \citep{dudik11doubly,wang17optimal} and more general finite horizon RL settings \citep{fonteneau13batch, li2015toward,jiang16doubly, thomas16data, liu2018representation, farajtabar18more,xie2019optimal}.
However, most of the existing works are based on importance sampling (IS) to correct the mismatch between the distribution of the whole trajectories induced by the behavior and target policies, which faces the ``curse of horizon'' \citep{liu2018breaking} when extended to long-horizon (or infinite-horizon) problems.

Several other works \citep{guo17using,hallak17consistent, liu2018breaking,gelada2019off,nachum2019dualdice} have been proposed to address the high variance issue in the long-horizon problems.
\citet{liu2018breaking} apply importance sampling on the average visitation distribution of
state-action pairs, 
instead of the distribution of the whole trajectories, 
which provides a unified approach to break ``the curse of horizon''.
However, they require to learn a density ratio function over the whole state-action pairs, which may induce large bias.
Our work incorporates the density ratio and value function estimation, which significantly reduces the induced bias of two estimators, resulting a doubly robust estimator.

Our work is also closely related to DR techniques used in finite horizon problems \citep{murphy01marginal,dudik11doubly,jiang16doubly,thomas16data,farajtabar18more}, which incorporate an approximate value function as control variates to IS estimators.
Different from existing DR approaches, our work is related to the well known duality between the density and the value function, 
which reveals the relationship between density (ratio) learning \citep{liu2018breaking} and value function learning.  
Based on this interesting observation, 
we further obtain the doubly robust estimator for estimating average reward in infinite-horizon problems.



\paragraph{Primal-Dual Value Learning} 
Primal-dual optimization techniques have been widely used for off-policy value function learning and policy optimization \citep{liu2015finite,chen2016stochastic,dai17learning,dai2017sbeed,feng2019kernel}.
Nevertheless, the duality between density and value function has not been well explored in the literature of off policy value estimation.
Our work proposes a new doubly robustness technique for off-policy value estimation, which can be naturally viewed as the Lagrangian function of the primal-dual formulation of policy evaluation, providing an alternative unified view for off policy value evaluation.






\section{Experiment}

\newcommand{\taxlen}{.285\linewidth}
\newcommand{\gapline}{-.03\linewidth}
\begin{figure}[t]
    \centering
    \begin{tabular}{ccc}
        \multicolumn{3}{c}{
        \includegraphics[width=.95\textwidth]{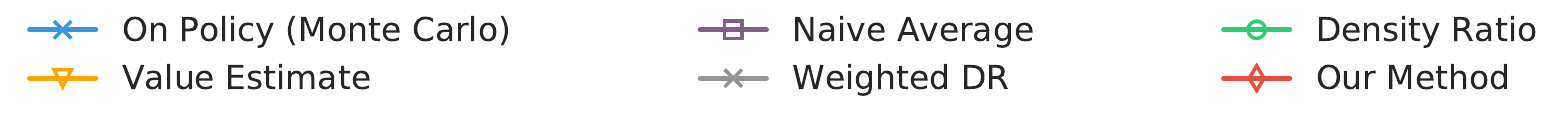}}\\
        \includegraphics[height=\taxlen]{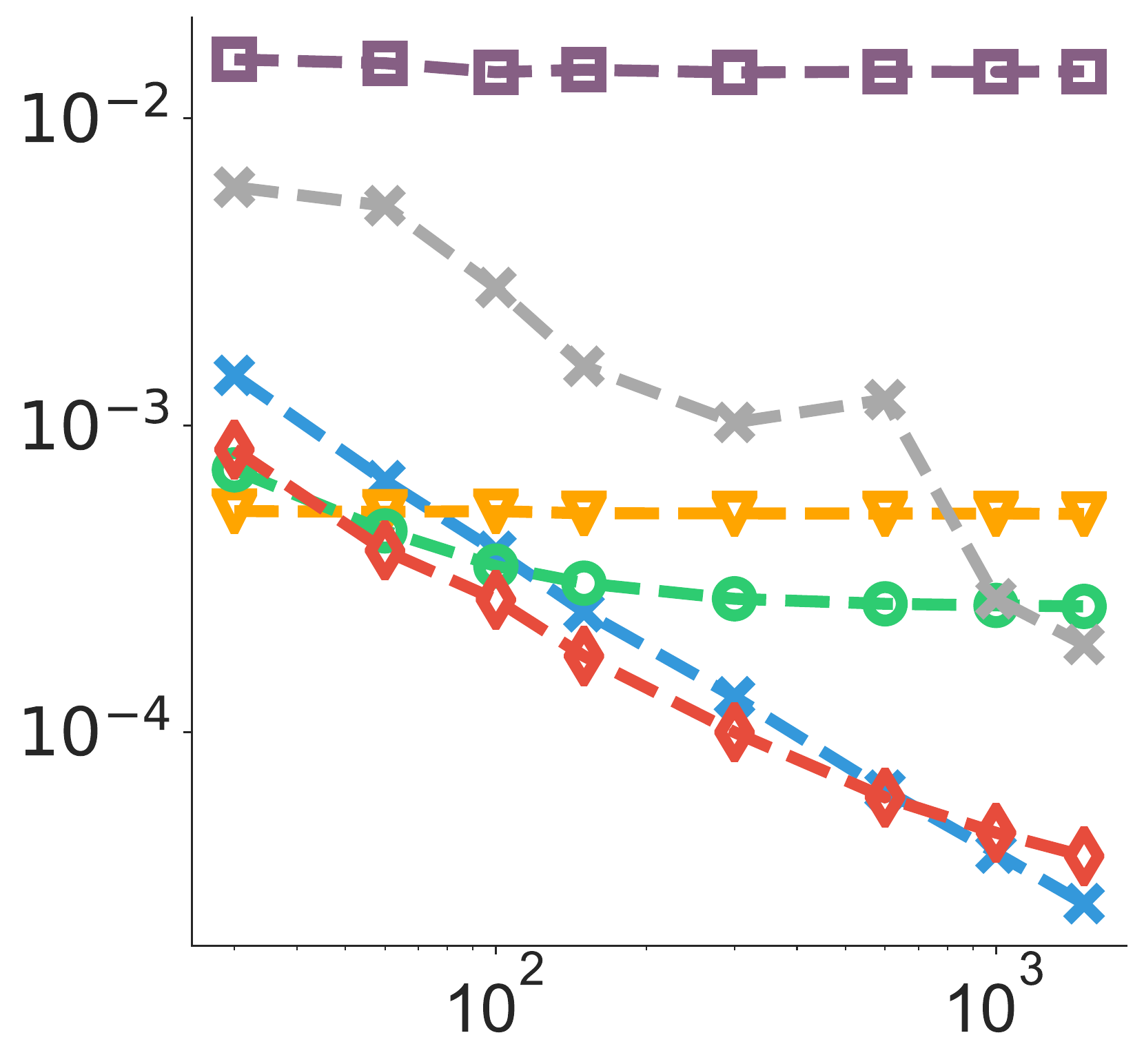}& 
        \hspace{\gapline}
        \includegraphics[height=\taxlen]{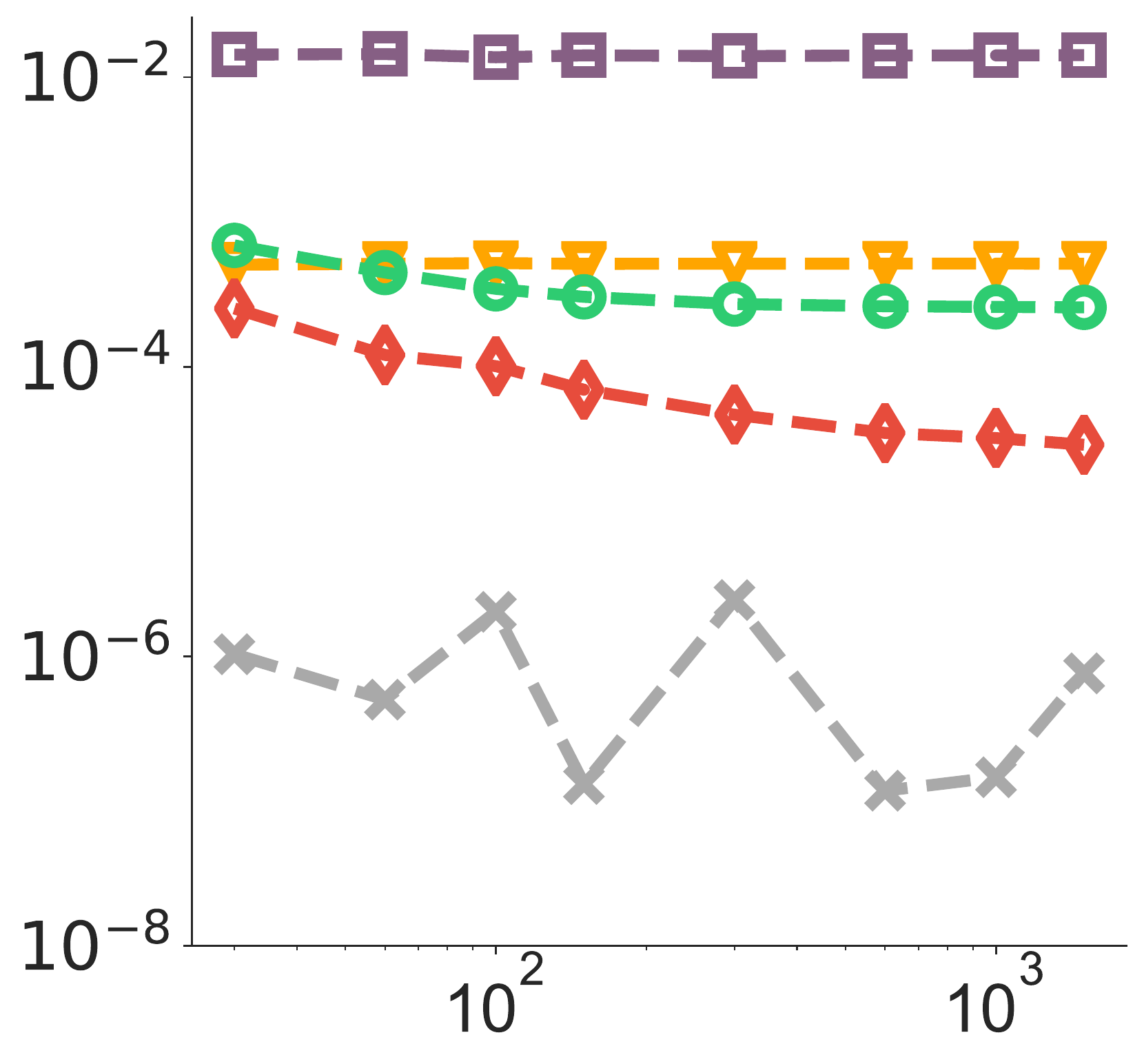}&
        \hspace{\gapline}
        \includegraphics[height=\taxlen]{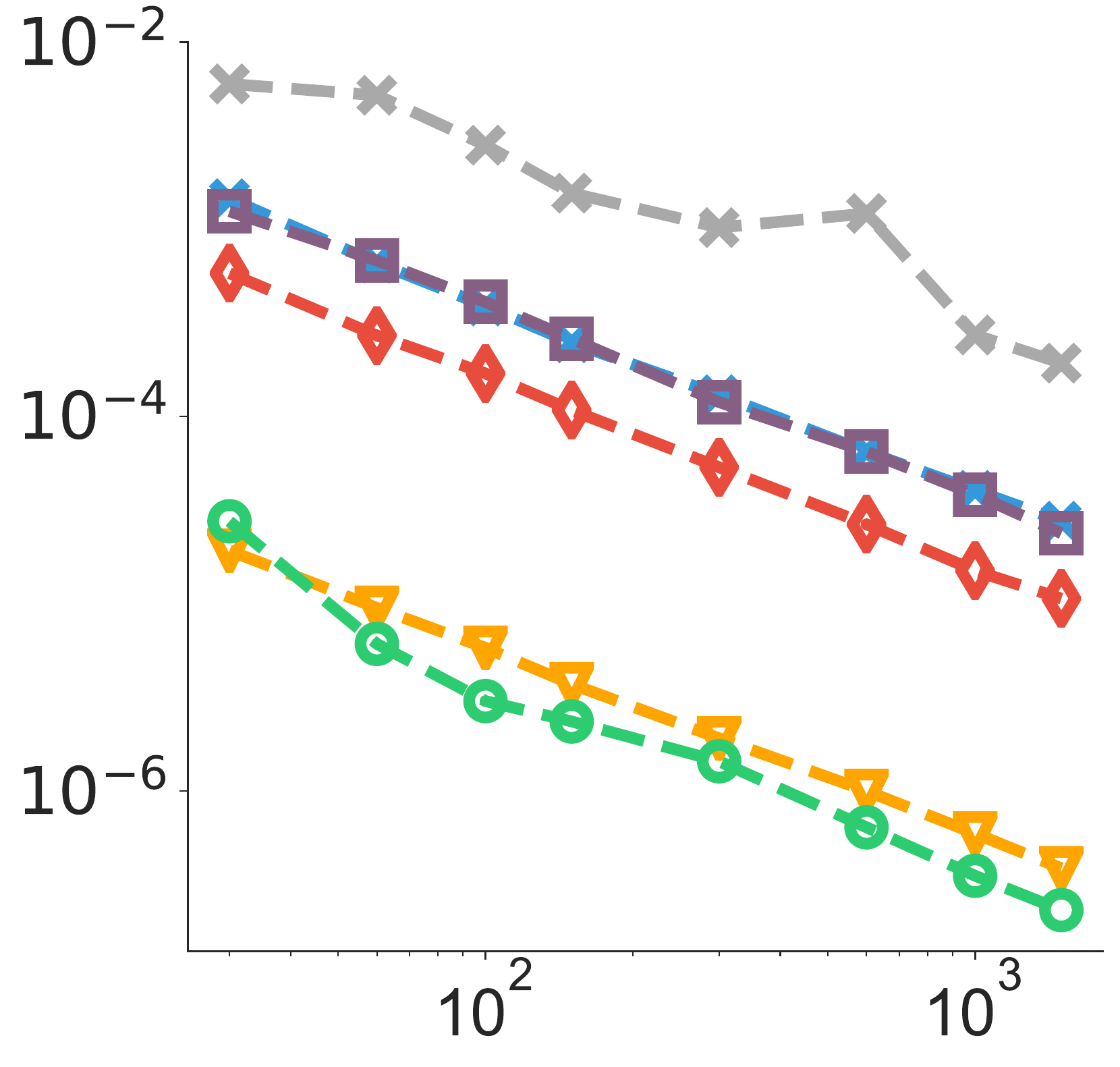}\\
        
        (a) \small{MSE} & (b) \small{Bias Square} & (c) \small{Variance}\\
         \includegraphics[height=\taxlen]{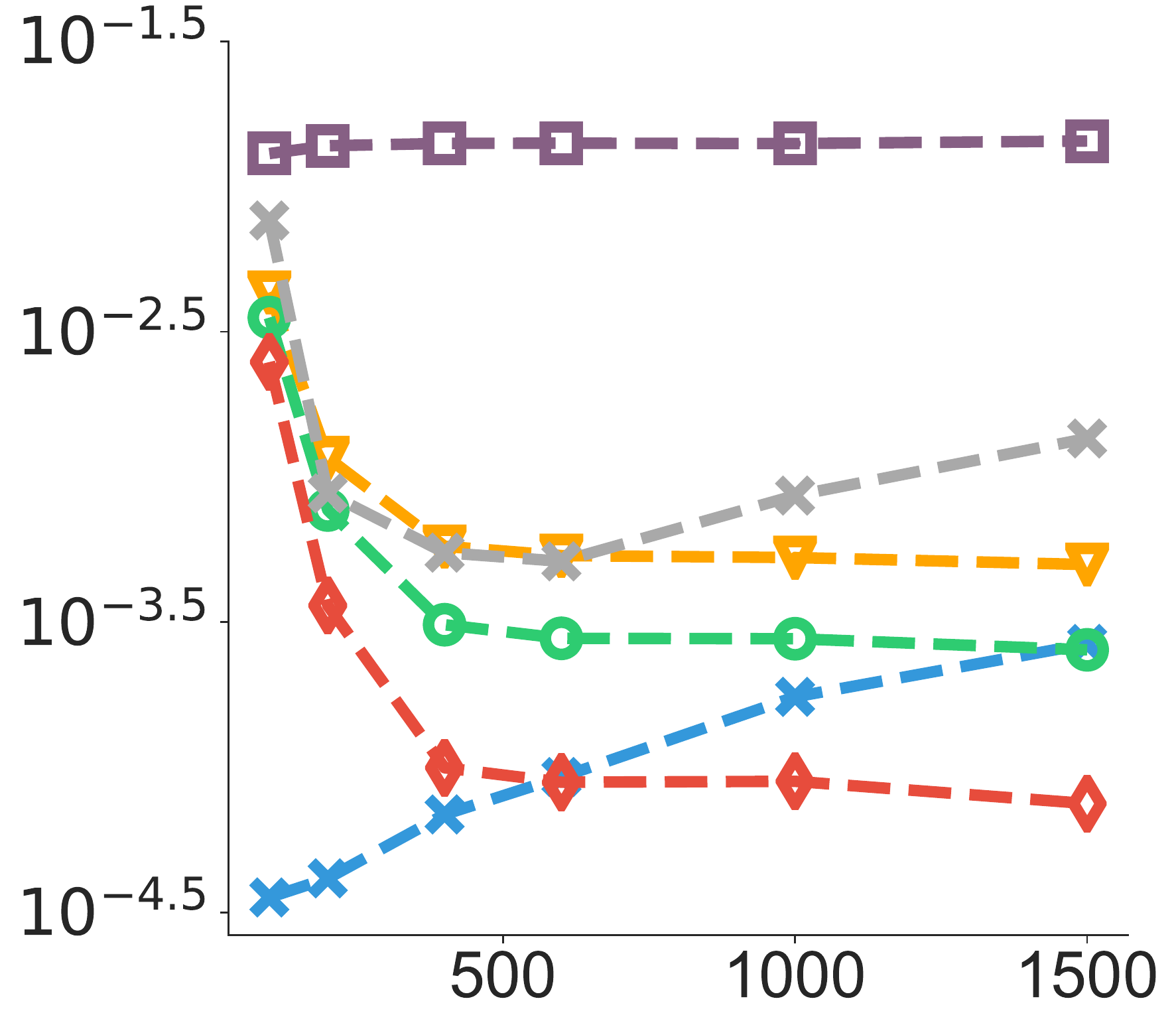}&
         \hspace{\gapline}
         \includegraphics[height=\taxlen]{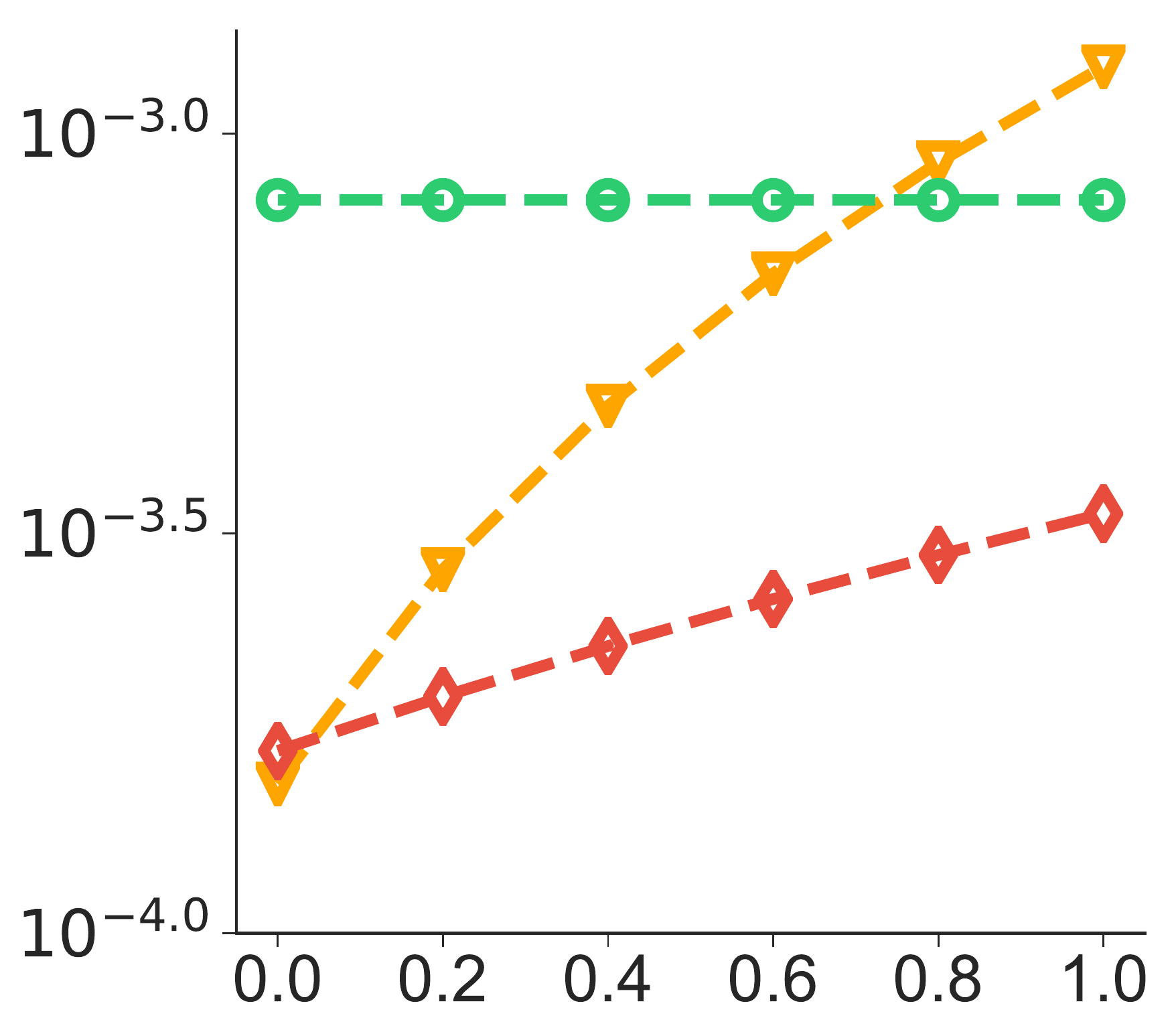}&
         \hspace{\gapline}
         \includegraphics[height=\taxlen]{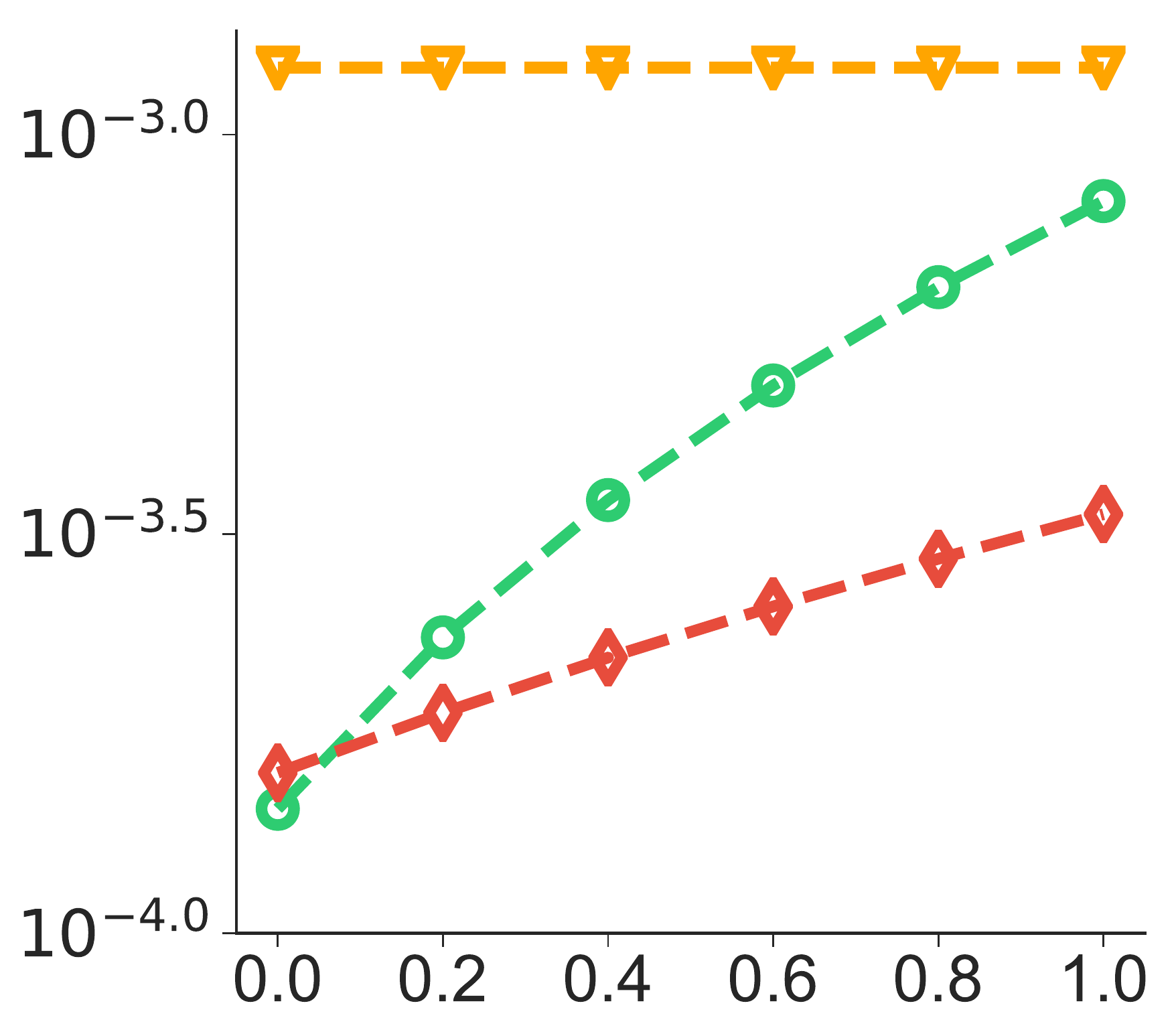}\\
       (d) \small{MSE with $H$ changes} & (e) \small{Bias square with $\alpha$ changes} & (f) \small{Bias square with $\beta$ changes} \\
    \end{tabular}
    \caption{Off Policy Evaluation Results on Taxi. Default parameter, discounted factor $\gamma = 0.99$, mixed ratio $\alpha = \beta = 1$, horizon length $H = 600$. For (a)-(c) the x-axis is the number of trajectories and y-axis corresponds to MSE, Bias Square and Variance, respectively. For (d) we fix the total number of samples (number of trajectories times horizon length) and change the horizon length as x-axis and observe the MSE. (e) and (f) show the change the mixed ratio of $\alpha$, $\beta$ with the change of bias. We repeat each experiment for 1000 runs.}
    \label{fig:taxi_evaluation}
\end{figure}

In this section, we conduct simulation experiments on different environmental settings to compare our new doubly robust estimator with existing methods.
We mainly compare with infinite horizon based estimator including state importance sampling estimator (\cite{liu2018breaking}) and value function estimator.
We do not report results on the vanilla trajectory-based importance sampling estimators because of their significant higher variance,
but we do compare with the doubly robust version induced by \citet{thomas16data} (self-normalized variant of \citet{jiang16doubly}).
In all experiments we compare with Monte Carlo and naive average as \citet{liu2018breaking} suggested.
The ground truth for each environment is calculated by averaging Monte Carlo estimation with a very large sample size. 

\paragraph{Taxi Environment}

We follow \citet{liu2018breaking}'s tabular environment \textit{Taxi},
which has $2000$ states and $6$ actions in total.
For more experimental details, please check appendix \ref{sec:taxi}.

We pre-train two different $\vhat$ and $\tilde{V}$ trained with a small and fairly large  size of samples, respectively, where $\tilde{V}$ is very close to true value function $V^\pi$ but $\vhat$ is relatively further from it.
Similarly we pre-train $\widehat{\rho}$ and $\tilde{\rho} \approx d_\pi$.
For estimation we use a mixed ratio $\alpha, \beta$ to control the bias of the input $V, \rho$, where $V = \alpha \vhat + (1-\alpha)\tilde{V}$ and $\rho = \beta \widehat{\rho} + (1-\beta)\tilde{\rho}$.

Figure \ref{fig:taxi_evaluation}(a)-(c) show results of comparison for different methods as we changing the number of trajectories.
We can see that the MSE performance of value function($\Rval$) and state visitation importance sampling($\Rsis$) estimators are mainly impeded by their large biases,
while our method has much less bias thus it can keep decreasing as sample size increase and achieves same performance as on policy estimator.
Figure \ref{fig:taxi_evaluation}(d) shows results if we change the horizon length. 
Notice that here we keep the number of samples to be the same, so if we increase our horizon length we will decrease the number of trajectories in the same time.
We can see that our method alongside with all infinite horizon methods will get better result as horizon length increase.
Figure \ref{fig:taxi_evaluation}(e)-(f) indicate the ``double robustness'' of our method, 
where our method benefits from either a better $V$ or a better $\rho$.


    
\paragraph{Puck-Mountain}
Puck-Mountain is an environment similar to Mountain-Car, except that the goal of Puck-Mountain is to push the puck as high as possible in a local valley, which has a continuous state space of $\mathbb{R}^{2}$ and a discrete action space similar to Mountain-Car. 
We use the softmax functions of an optimal Q-function as both target policy and behavior policy, where the temperature of the behavior policy is higher (encouraging exploration).
For more details of constructing policies and training algorithms for density ratio and value functions, please check appendix \ref{sec:exp_continuous}.



Figure \ref{fig:puck_eval}(a)-(c) show results of comparison for different methods as we changing the number of trajectories.
Similar to taxi, we find our method has much lower bias than density ratio and value function estimation, which yields a better MSE.
In Figure \ref{fig:puck_eval}(d) the performance for all infinite horizon estimator will not degenerate as horizon increases,
while finite horizon method such as finite weighted horizon doubly robust will suffer from larger variance as horizon increases.

\newcommand{\penlen}{.24\linewidth}
\newcommand{\cgapline}{-.035\linewidth}
\begin{figure}[t]
    \centering
     \begin{tabular}{cccc}
        \multicolumn{4}{c}{
        \includegraphics[width=.98\textwidth]{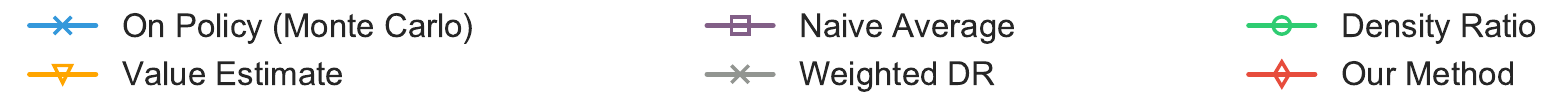}}\\
        \hspace{\cgapline}
        \includegraphics[height=\penlen]{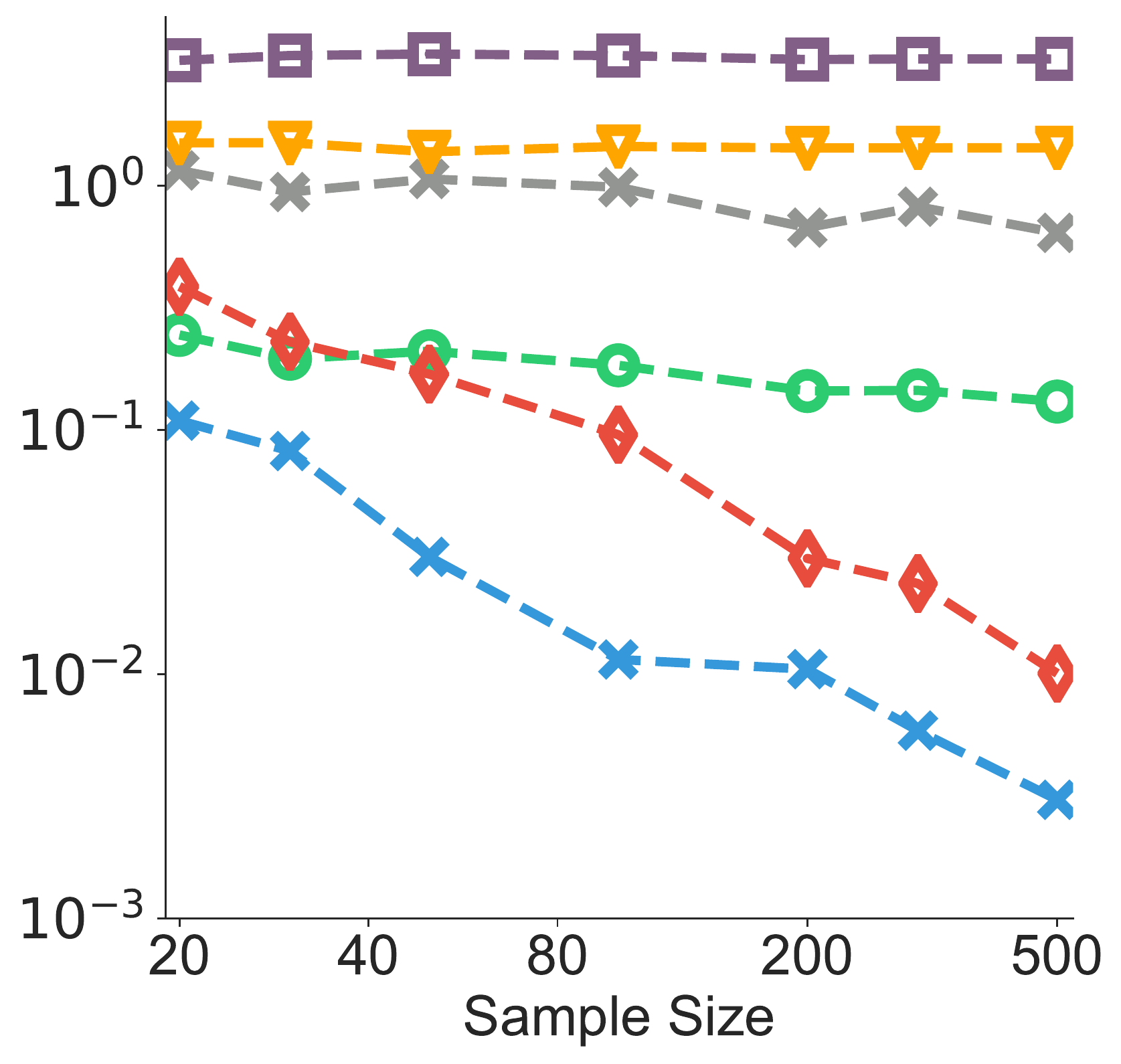}&
        \hspace{\cgapline}
        \includegraphics[height=\penlen]{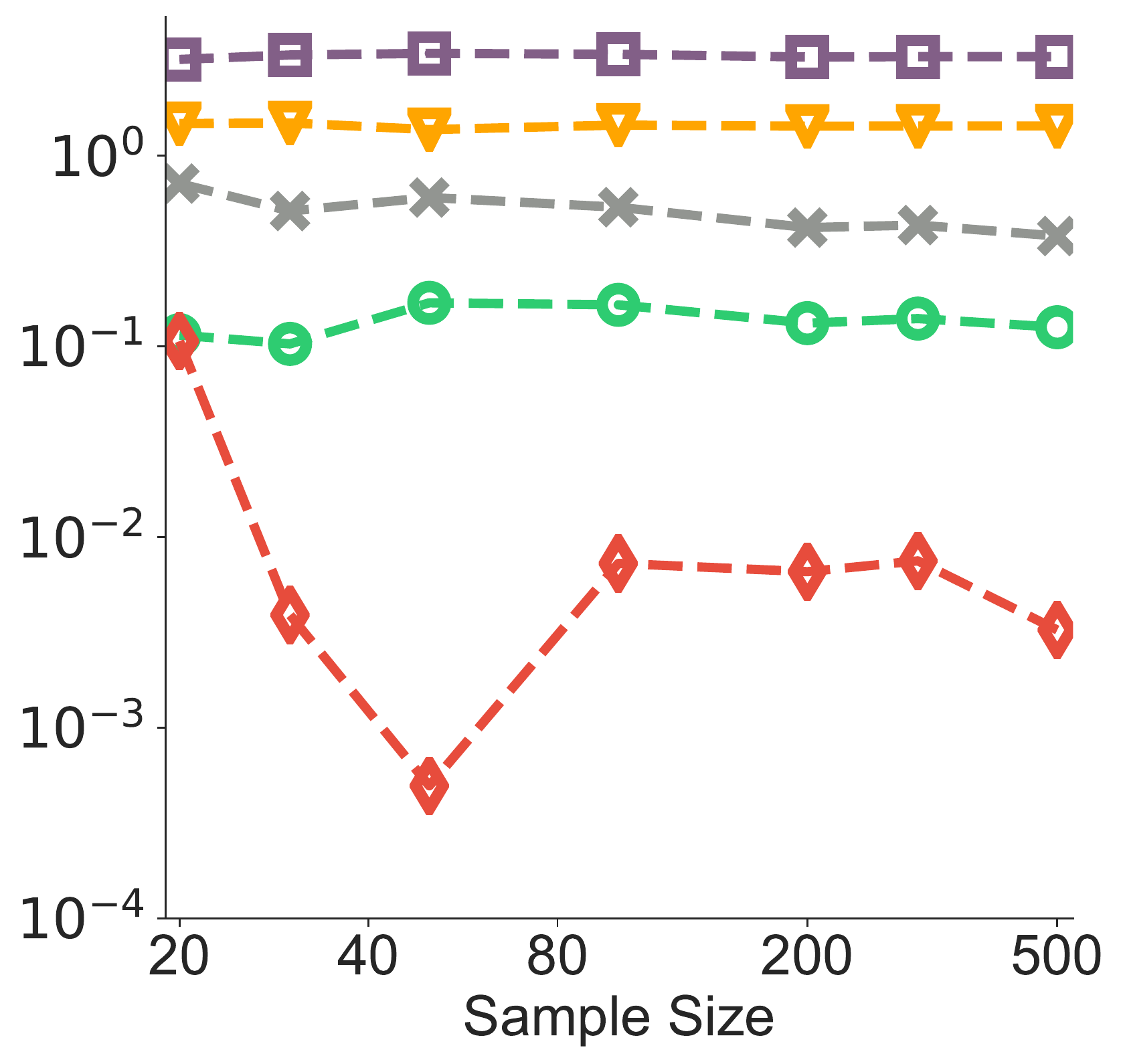}&
        \hspace{\cgapline}
        \includegraphics[height=\penlen]{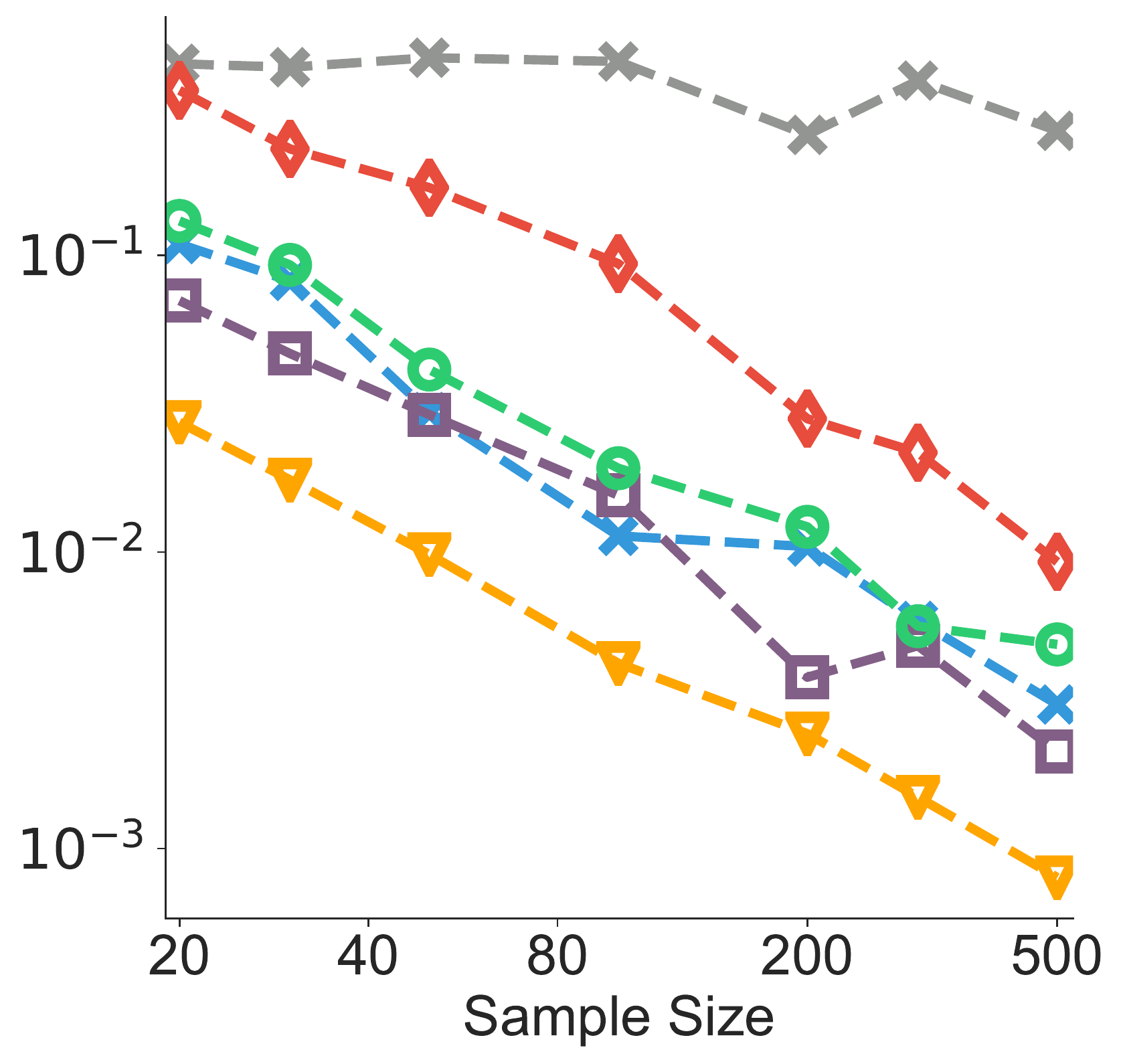}&
        \hspace{\cgapline}
         \includegraphics[height=\penlen]{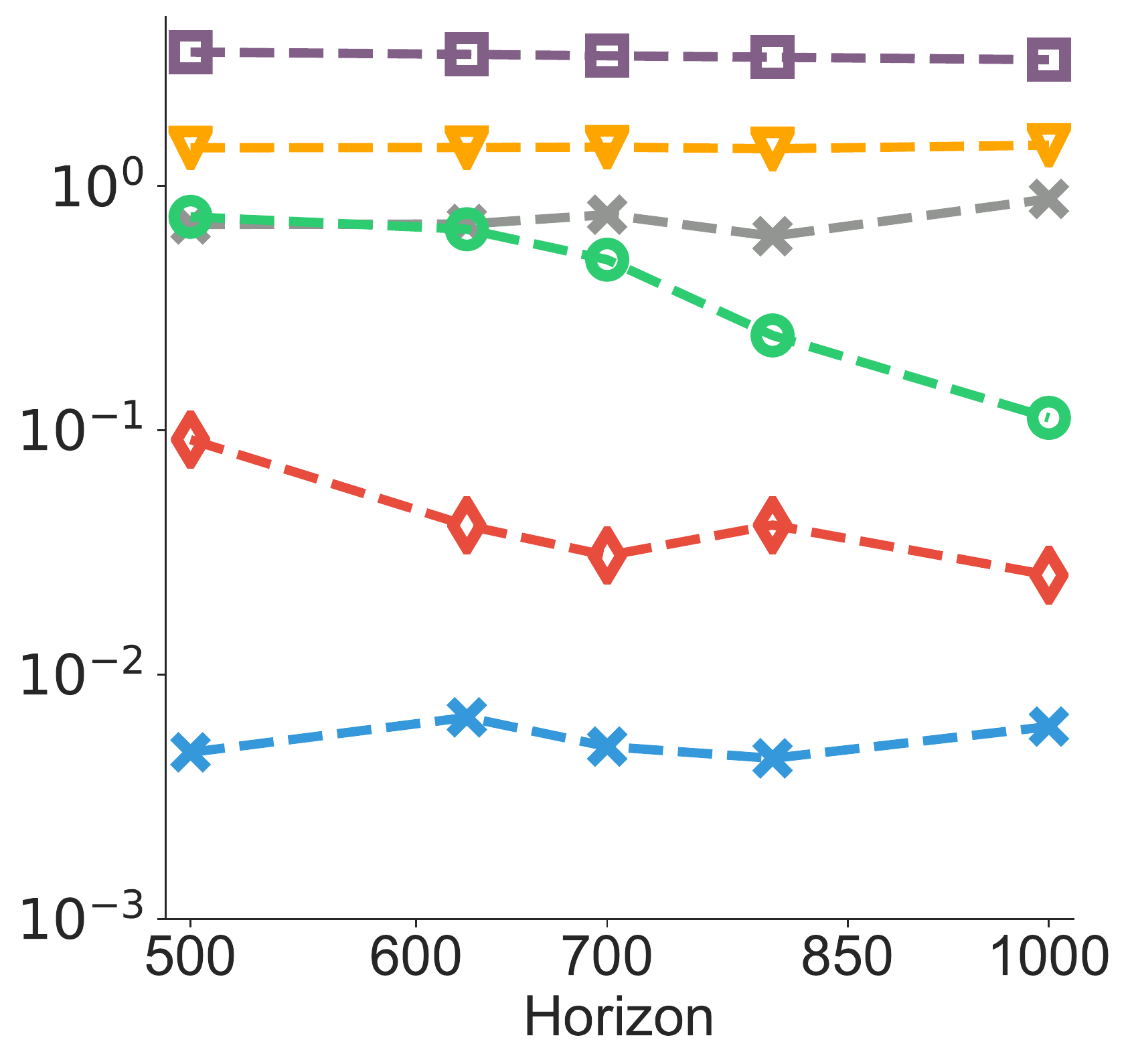}\\
        (a) \small{MSE} & (b) \small{Bias Square} & (c) \small{Variance} & (d) \small{MSE with $H$ changes} \\
    \end{tabular}
    \caption{Off Policy Evaluation Results on Puck-Mountain. We set discounted factor $\gamma = 0.995$ as default. For (a)-(c) we set the horizon $H = 1000$ and the x-axis is the number of trajectories for used for evaluation. For (d) we fix the total number of samples and change the horizon length.}
    \label{fig:puck_eval}
\end{figure}

\begin{figure}[t]
    \centering
    \begin{tabular}{cccc}
        \multicolumn{4}{c}{
        \includegraphics[width=1.00\textwidth]{figures/inverted_all/legend_bench.pdf}}\\
        \hspace{\gapline}
        \includegraphics[height=\penlen]{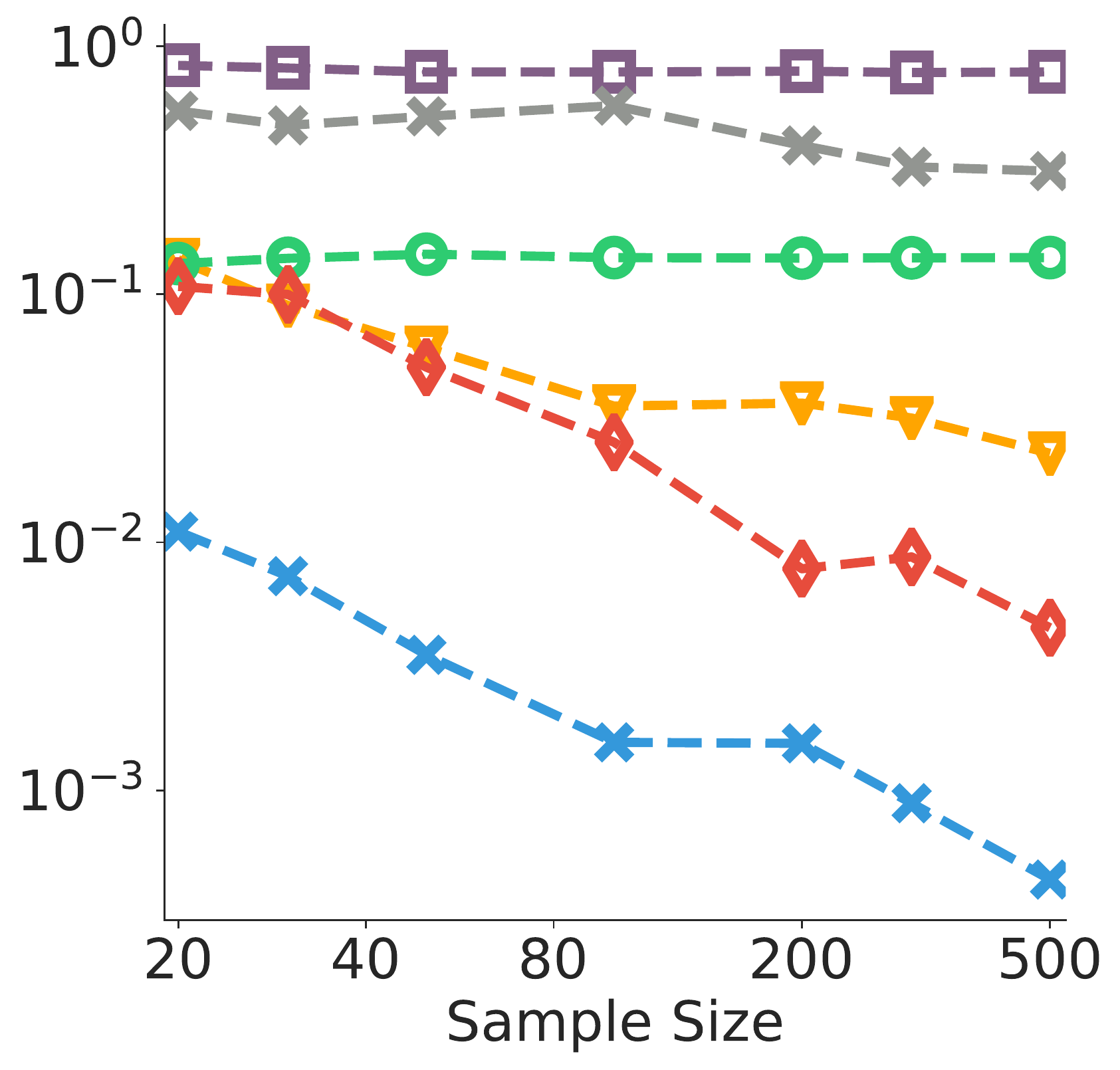}& 
        \hspace{\cgapline}
        \includegraphics[height=\penlen]{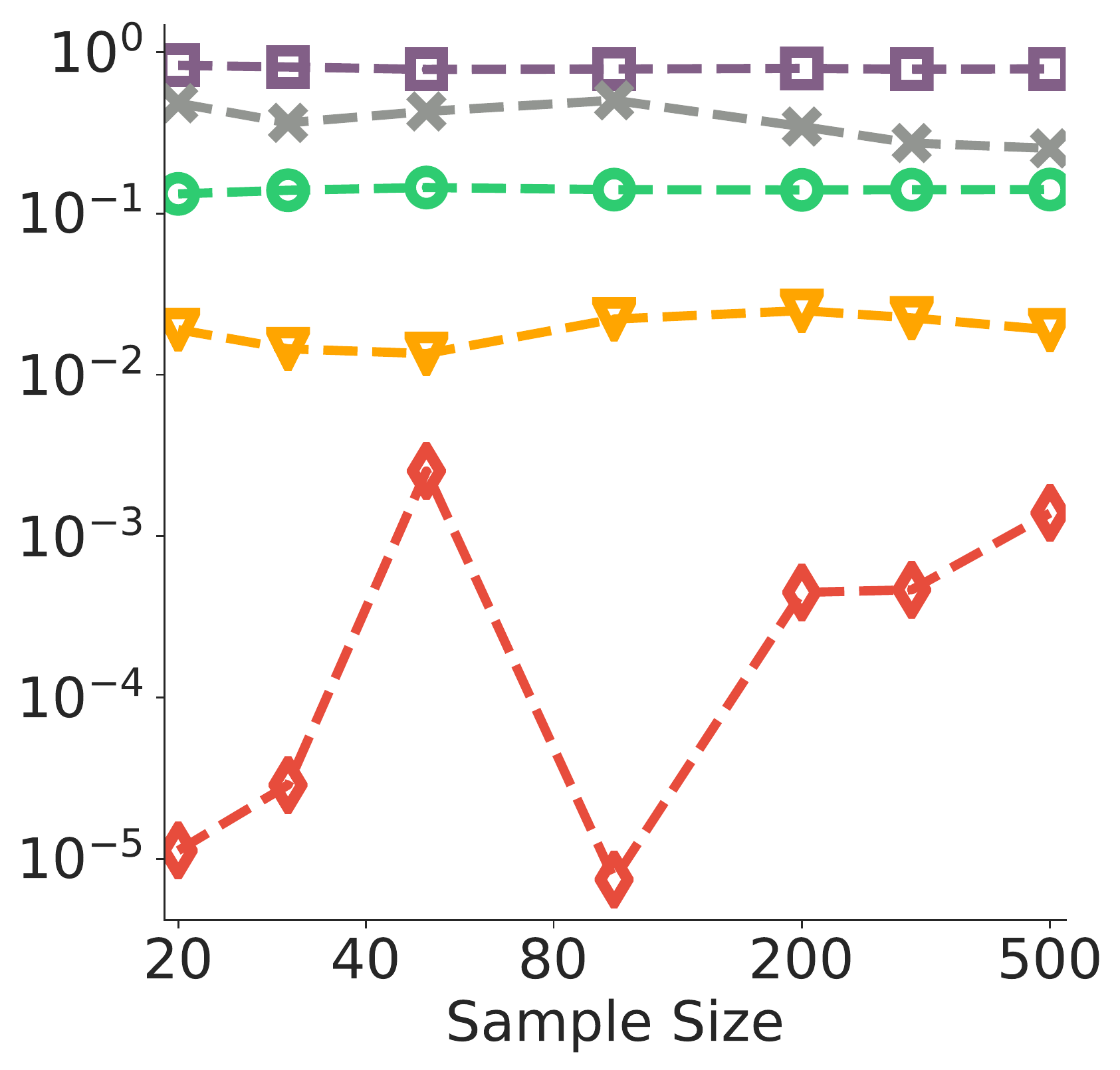}&
        \hspace{\cgapline}
        \includegraphics[height=\penlen]{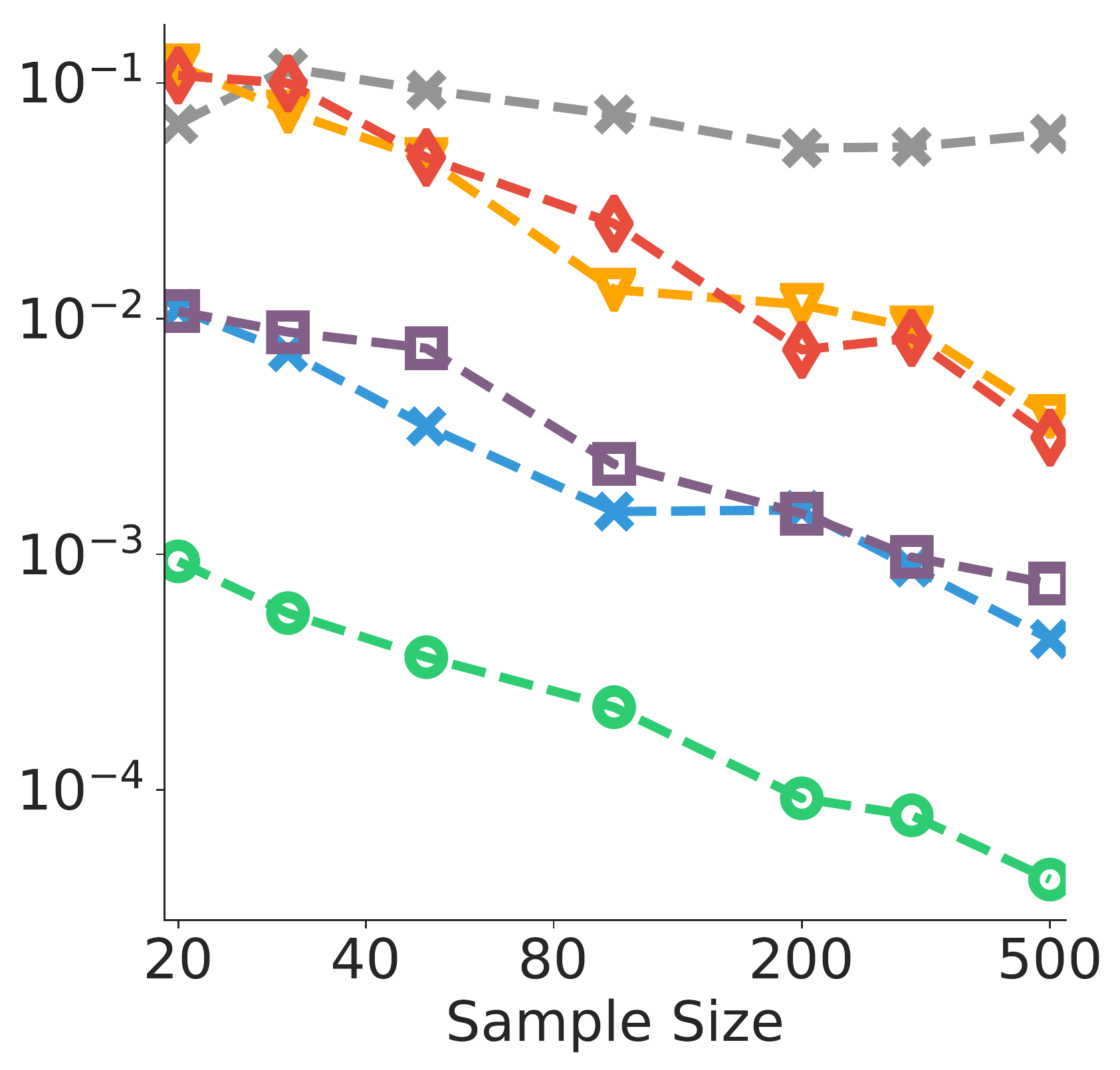}&
        \hspace{\cgapline}
         \includegraphics[height=\penlen]{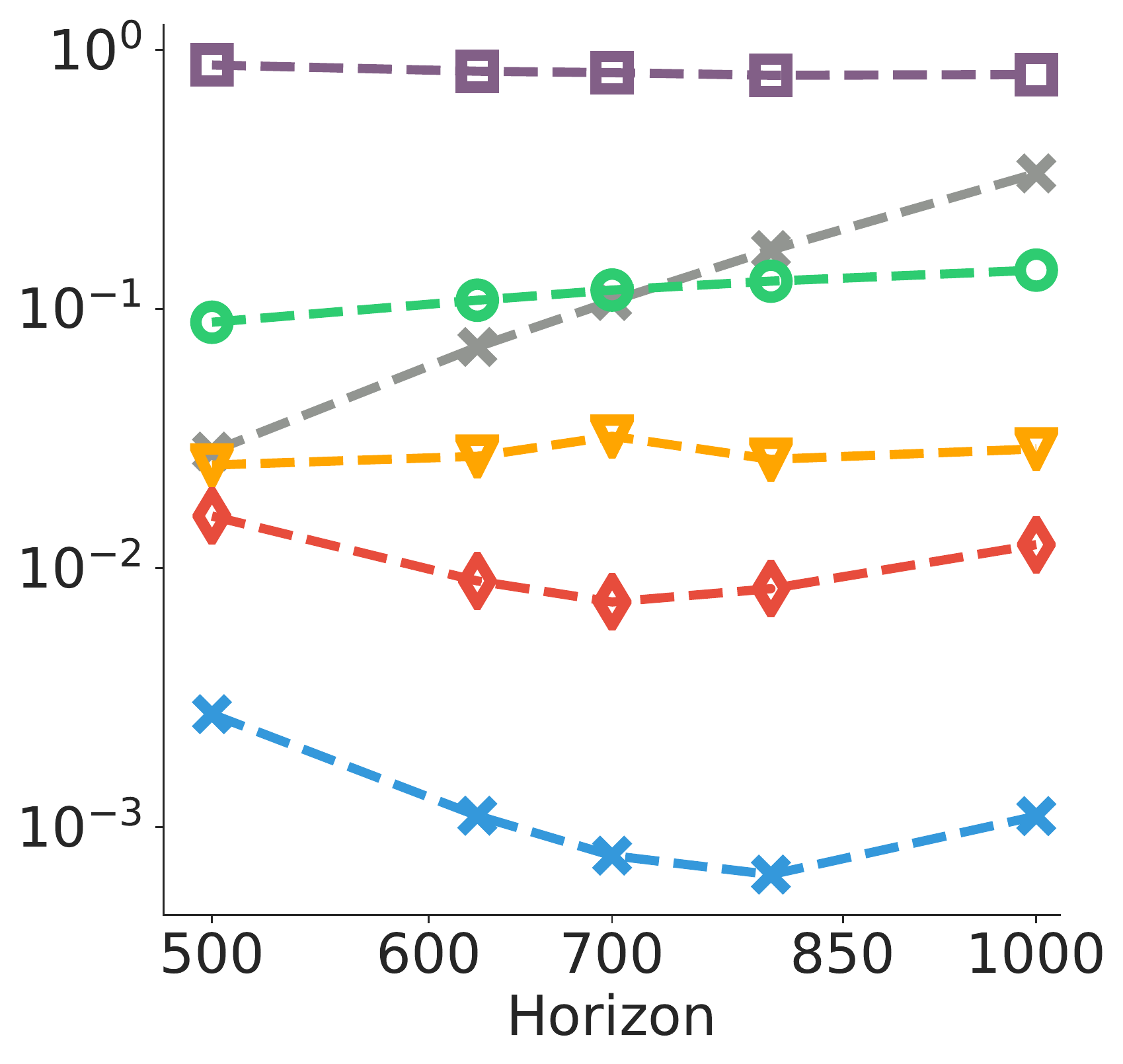}\\
        (a) \small{MSE} & (b) \small{Bias Square} & (c) \small{Variance} & (d) \small{MSE with $H$ changes} 
        \vspace{-.em}\\
    \end{tabular}
    \vspace{-0.0em}
    \caption{Off Policy Evaluation Results on InvertedPendulum-v2. We set discounted factor $\gamma = 0.995$ as default. For (a)-(c) we set the horizon $H = 1000$ and the x-axis is the number of trajectories for used for evaluation. For (d) we fix the total number of samples and change the horizon length.}
    \label{fig:InvertedPendulum_evaluation}
\end{figure}

\paragraph{InvertedPendulum}
InvertedPendulum is a pendulum that has its center of mass above its pivot point. 
We use the implementation of InvertedPendulum from OpenAI gym \citep{brockman2016openai}, which is a continuous control task with state space in $\R^4$ and we discrete the action space to be $\{-1, -0.3, -0.2, 0, 0.2, 0.3, 1\}$.
More experiment details can be found in appendix \ref{sec:exp_continuous}.

In Figure \ref{fig:InvertedPendulum_evaluation}(a)-(c) our method  again significantly reduces the bias, which yields a better MSE comparing with value and density estimation. 
Figure \ref{fig:InvertedPendulum_evaluation}(d) also shows that our method consistently outperforms all other methods as the horizon increases with a fixed total timesteps.



\section{Conclusion}

In this paper, we develop a new doubly robust estimator based on the infinite horizon density ratio and off policy value estimation.
Our new proposed doubly robust estimator can be accurate as long as one of the estimators are accurate, which yields a significant advantage comparing to previous estimators. 
Future directions include deriving more novel optimization algorithms to learn value function and density(ratio) function by using the primal dual framework.

\clearpage
\bibliography{reference}
\bibliographystyle{iclr2020_conference}

\clearpage

\onecolumn
\appendix
\begin{center}
\Large
\textbf{Appendix}
\end{center}

\section{Proof}
\subsection{Transition Operator for Bellman Equation}
\label{section:operator}
For simplicity, we define the following two operators thorough our proofs to simplify our notations.
\begin{mydef}
Given a policy $\pi$ and the unknown environment transition $\T$, we define $\Tpi$ and $\Ppi$ over any function $f:\Sset\to \R$ as
\begin{align*}
    & \left(\Tpi f\right) (s') = \sum_{s,a} \T(s'|s,a) \pi(a|s) f(s)\\
    & \left(\Ppi f\right) (s) = \sum_{s',a} \T(s'|s,a) \pi(a|s) f(s')
\end{align*}
\label{def:operator}
\end{mydef}

Using these operator notations, we can rewrite the above two recursive equations as:
\begin{align*}
    &V^\pi = r^\pi + \gamma \Ppi V^\pi, \\
    &d_\pi = (1-\gamma) \mu_0 + \gamma \Tpi d_\pi,
\end{align*}
where $r^\pi(s) = \E_{a\sim \pi(\cdot|s)}[r(s,a)]$.

These transition operators have the following nice adjoint property.
\begin{lem}
For two function $f$ and $g$, if the following summation is finite, we will have
\begin{equation}
    \sum_{s} \left(\Ppi f\right)(s) g(s) = \sum_{s} f(s) \left(\Tpi g\right)(s).
\end{equation}
\label{lem:adjoint}
\end{lem}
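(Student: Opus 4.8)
The plan is to establish the identity by direct expansion: substitute the definitions of $\Ppi$ and $\Tpi$ from Definition~\ref{def:operator} into each side and observe that both collapse to the same sum over triples $(s,s',a)$.

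Concretely, I would first expand the left-hand side. Using $(\Ppi f)(s) = \sum_{s',a}\T(s'|s,a)\pi(a|s)f(s')$,
\[
\sum_s (\Ppi f)(s)\,g(s) = \sum_s \sum_{s',a} \T(s'|s,a)\,\pi(a|s)\,f(s')\,g(s) = \sum_{s,s',a} \T(s'|s,a)\,\pi(a|s)\,f(s')\,g(s).
\]
Then I would expand the right-hand side, using $(\Tpi g)(s') = \sum_{s,a}\T(s'|s,a)\pi(a|s)g(s)$ (renaming the free variable of $\Tpi g$ to $s'$ to line up the indices),
\[
\sum_{s'} f(s')\,(\Tpi g)(s') = \sum_{s'} \sum_{s,a} \T(s'|s,a)\,\pi(a|s)\,f(s')\,g(s) = \sum_{s,s',a} \T(s'|s,a)\,\pi(a|s)\,f(s')\,g(s).
\]
The two resulting triple sums are identical, which gives $\sum_s (\Ppi f)(s)g(s) = \sum_s f(s)(\Tpi g)(s)$.

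The only step requiring justification — and the reason for the ``if the following summation is finite'' hypothesis — is the interchange of the order of summation used to merge the nested sums into a single unordered triple sum. When $\Sset$ and $\Aset$ are finite this is automatic. In general the hypothesis should be read as absolute convergence of $\sum_{s,s',a}\T(s'|s,a)\pi(a|s)\,|f(s')|\,|g(s)|$; since the summand $\T(s'|s,a)\pi(a|s)|f(s')||g(s)|$ is nonnegative, Tonelli's theorem already forces the iterated sums in either order to agree with the unordered sum, and absolute convergence then lets Fubini's theorem transfer this to the signed summand. For continuous state or action spaces one replaces the sums by integrals against the appropriate reference measures and invokes the same Fubini--Tonelli argument. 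I do not anticipate any genuine obstacle: the content is purely a reindexing, and the measure-theoretic bookkeeping is routine.
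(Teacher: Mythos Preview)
Your proposal is correct and essentially identical to the paper's own proof: both expand $\Ppi$ and $\Tpi$ directly from Definition~\ref{def:operator} and identify the resulting common triple sum over $(s,s',a)$. You additionally spell out the Fubini--Tonelli justification for reordering, which the paper leaves implicit.
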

\begin{proof}
\begin{align*}
    \sum_{s} \left(\Ppi f\right)(s) g(s) =& \sum_{s} \left(\sum_{s',a} \T(s'|s,a) \pi(a|s) f(s')\right) g(s) \\
    =& \sum_{s'} f(s') \left(\sum_{s,a} \T(s'|s,a) \pi(a|s) g(s)\right) \\
    =& \sum_{s'} f(s') \left(\Tpi g\right)(s')
\end{align*}
\end{proof}

Using this property we can actually using Bellman Equations to re-derive the two different way to get $\Rpi$.
\begin{align*}
    \Rpi =& \lim_{T\to \infty} \E_{\tau^{(i)} \sim \pi}[\frac{\sum_{t=0}^T \gamma^t r_t}{\sum_{t=0}^T \gamma^t}] \\
    =& \sum_{s} V^\pi(s) (1-\gamma) \mu_0(s) \\
    =& \sum_{s} V^\pi(s) \left(\mathcal{I} - \gamma \Tpi \right)d_\pi(s) \\
    =& \sum_{s} \left(\mathcal{I} - \gamma \Ppi \right)V^\pi(s) d_\pi(s) \\
    =& \sum_{s} r^\pi(s)d_\pi(s).
\end{align*}

\subsection{Proof of Theorem \ref{thm:biased}}
\begin{proof}
Using the property of the operator, we can rewrite $(1-\gamma) \mu_0(s)$ using Bellman equation as $d_\pi - \gamma \Tpi d_\pi$, thus we have
\begin{align*}
    \Rvali[\vhat] =& (1-\gamma) \sum_{s} \vhat(s) \mu_0(s) \\
    =& \sum_s \vhat(s) \left(d_\pi - \gamma \Tpi d_\pi \right)(s) \\
    =& \sum_s \left(I - \gamma \Ppi\right)\vhat (s) d_\pi(s).
\end{align*}
and similarly if we break $r^\pi$ as $(I-\gamma \Ppi)V^\pi$, for $\Rsisi[\what]$ we have:
\begin{equation*}
    \Rsisi[\what] = \sum_s \left(I - \gamma \Ppi\right)V^\pi (s) d_{\what}(s),
\end{equation*}
where $d_{\what} = d_{\pi_0} \what$ for short.

Compare with $\Rpi = \sum_s \left(I - \gamma \Ppi\right)V^\pi (s) d_\pi(s)$, we can see the main difference between $\Rsisi$ and $\Rvali$ with $\Rpi$ are they replace $d_\pi$ and $d_{\what}$ and $V^\pi$ with $\vhat$ respectively.
If we add them together and minus the connection estimator, we have
we will have:
\begin{align*}
    \Rdri[\vhat, \what] =& \Rsisi[\what] + \Rvali[\vhat] - \Rconni[\vhat, \what] - \Rpi \\
    =& \sum_s \left(\left(I - \gamma \Ppi\right)V^\pi (s) d_{\what}(s) + \left(I - \gamma \Ppi\right)\vhat (s) d_\pi(s) - \left(I - \gamma \Ppi\right)\vhat (s) d_{\what}(s) - \left(I - \gamma \Ppi\right)V^\pi (s) d_\pi(s)\right) \\
    =& \sum_s \left(I - \gamma\Ppi \right) (V^\pi - \vhat)(s) (d_{\what} - d_\pi)(s)
\end{align*}
where $\left(I - \gamma\Ppi \right) (V^\pi - \vhat) = \left(I - \gamma\Ppi \right) V^\pi - \left(I - \gamma\Ppi \right) \vhat = r^\pi - \left(I - \gamma\Ppi \right) \vhat$.
\end{proof}

\subsection{More discussions on the Variance in Theorem \ref{thm:variance}}
\begin{thm}
Let $\var\left [ \Rres[\vhat, \what] \right ]$ be defined in \ref{thm:variance}, and suppose we can uniformly draw samples from $d_{\pi_0}$ to form empirical $\E_{\widehat{d_{\pi_0}}}$ (in practice we can draw sample $s_t$ depends on its discounted factor $\gamma_t$). 
Then we can further break it into two terms.
\begin{equation}
    \var_{\mathcal D_{\pi_0}}\left [ \Rres[\vhat, \what] \right ] = \frac{1}{n}(\var\left[\what(s) \varepsilon_{\vhat}(s) \right] + \E\left[\what(s)^2 \left(\delta_1(s,a) + \gamma \delta_2(s,a,s')\right)^2\right]),
\end{equation}
where $\varepsilon_{\vhat}(s) = \vhat(s) - r^\pi(s) - \gamma \Ppi \vhat(s')$ is the Bellman residual, $\delta_1(s,a) = \frac{\pi(a|s)}{\pi_0(a|s)} r(s,a) - r^\pi(s)$ is the randomness for action and $\delta_2(s,a,s') = \frac{\pi(a|s)}{\pi_0(a|s)} \vhat(s') - \Ppi \vhat(s)$ is the randomness for transition operator over function $\vhat$.
Both $\delta_1$ and $\delta_2$ is zero mean if we condition over $s$.

Compared with $\var\left[\Rsis[\what]\right]$ we have:
\begin{equation}
    \var\left[\Rsis[\what]\right] = \frac{1}{n}(\var\left[\what(s) r^\pi(s) \right] + \E\left[\what(s)^2 \delta_1(s,a)^2\right])
\end{equation}
\end{thm}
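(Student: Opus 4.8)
The plan is to exploit the fact that, under the stated assumptions (constant normalization, and an i.i.d.\ sample $\mathcal D_{\pi_0}$ of size $n$ drawn from the joint law $s\sim d_{\pi_0}$, $a\sim\pi_0(\cdot\mid s)$, $s'\sim\T(\cdot\mid s,a)$), the estimator $\Rres[\vhat,\what]$ is the empirical mean of $n$ independent copies of the single-sample variable
\[
X := \what(s)\Big( \tfrac{\pi(a\mid s)}{\pi_0(a\mid s)} r(s,a) \;-\; \vhat(s) \;+\; \gamma\, \tfrac{\pi(a\mid s)}{\pi_0(a\mid s)} \vhat(s') \Big),
\]
so that $\var_{\mathcal D_{\pi_0}}[\Rres[\vhat,\what]] = \tfrac1n\var[X]$ and it suffices to decompose $\var[X]$. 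I would then apply the law of total variance conditioning on the state $s$: $\var[X] = \E\big[\var[X\mid s]\big] + \var\big[\E[X\mid s]\big]$.

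For the outer (``explained'') term, note that $\what(s)$ and $\vhat(s)$ are deterministic given $s$, while the importance-weight identities $\E\big[\tfrac{\pi(a\mid s)}{\pi_0(a\mid s)} r(s,a)\mid s\big] = r^\pi(s)$ and $\E\big[\tfrac{\pi(a\mid s)}{\pi_0(a\mid s)} \vhat(s')\mid s\big] = \Ppi\vhat(s)$ give $\E[X\mid s] = \what(s)\big(r^\pi(s) - \vhat(s) + \gamma\Ppi\vhat(s)\big) = -\,\what(s)\,\varepsilon_{\vhat}(s)$; hence $\var[\E[X\mid s]] = \var[\what(s)\varepsilon_{\vhat}(s)]$, the first term in the claim. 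For the inner term, subtracting this conditional mean yields $X - \E[X\mid s] = \what(s)\big(\delta_1(s,a) + \gamma\,\delta_2(s,a,s')\big)$ with $\delta_1,\delta_2$ exactly as defined, each having zero conditional mean by the same two identities; therefore $\var[X\mid s] = \what(s)^2\,\E\big[(\delta_1+\gamma\delta_2)^2\mid s\big]$, and taking the expectation over $s\sim d_{\pi_0}$ produces $\E[\what(s)^2(\delta_1+\gamma\delta_2)^2]$. Summing the two contributions and dividing by $n$ gives the stated formula.

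The comparison with $\Rsis[\what]$ is the same argument applied to $Y := \what(s)\tfrac{\pi(a\mid s)}{\pi_0(a\mid s)} r(s,a)$: here $\E[Y\mid s] = \what(s) r^\pi(s)$ and $Y-\E[Y\mid s] = \what(s)\delta_1(s,a)$, so the law of total variance gives $\var[\Rsis[\what]] = \tfrac1n\big(\var[\what(s) r^\pi(s)] + \E[\what(s)^2\delta_1(s,a)^2]\big)$, and a brief remark that $|\varepsilon_{\vhat}|\ll|r^\pi|$ when $\vhat\approx V^\pi$ explains why the leftover term of $\var[\Rres]$ is typically smaller.

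I expect the only delicate point to be the sampling model rather than the algebra: the decomposition above is a textbook use of the law of total variance once $\Rres$ is written as an i.i.d.\ average, so the real work is justifying that one can form $\hat\E_{\widehat{d_{\pi_0}}}$ from genuinely i.i.d.\ draws of $d_{\pi_0}$. In trajectory data each $s_t$ is observed with weight proportional to $\gamma^t$, so one has to argue (via the resampling scheme alluded to in the statement, i.e.\ drawing $s_t$ with probability proportional to its discount) that the resulting estimator is an unbiased average of i.i.d.\ terms; granting that, nothing further is needed.
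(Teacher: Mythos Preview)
Your proposal is correct and follows essentially the same route as the paper: write $\Rres$ as an i.i.d.\ average of the single-sample quantity $X=\what(s)\big(\betar(a\mid s)(r+\gamma\vhat(s'))-\vhat(s)\big)$, decompose $X$ into $-\what(s)\varepsilon_{\vhat}(s)+\what(s)(\delta_1+\gamma\delta_2)$, and use that the second piece has zero conditional mean given $s$ to split the variance. The paper phrases the last step as ``$\varepsilon_{\vhat}(s)$ is constant given $s$, hence independent of $\delta_1,\delta_2$'', whereas you invoke the law of total variance explicitly; these are the same computation, and your remark about the i.i.d.\ sampling assumption is exactly what the paper sweeps under the hypothesis ``suppose we can uniformly draw samples from $d_{\pi_0}$''.
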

\begin{proof}
$\Rres[\vhat, \what]$ can be written as
\begin{align*}
    \Rres[\vhat, \what] = \frac{1}{n} \sum \what(s)\left(\betar(a|s) (r + \gamma \vhat(s')) - \vhat(s)\right),
\end{align*}
where $\betar(a|s)$ is short for $\frac{\pi(a|s)}{\pi_0(a|s)}$.
We can break $\betar(a|s)(r + \gamma \vhat(s')) - \vhat(s)$ into
\begin{align*}
    &\betar(a|s)(r(s,a) + \gamma \vhat(s')) - \vhat(s)\\
    =& (-\vhat(s) + r^\pi(s) + \gamma \Ppi \vhat (s)) + \underbrace{(\betar(a|s) r(s,a) - r^\pi(s))}_{\delta_1} + \gamma \underbrace{(\betar(a|s) \vhat(s') - \Ppi \vhat(s))}_{\delta_2} \\
    =& -\varepsilon_{\vhat}(s) + \delta_1(s,a) + \gamma \delta_2(s,a,s').
\end{align*}
where $\varepsilon_{\vhat} = \vhat - r^\pi - \Ppi \vhat$ is the Bellman residual and the if we condition over $s$ we have the expectations for $\delta_1$ and $\delta_2$ are 0.
Also notice that if we condition over $s$ then $\varepsilon_{\vhat}(s)$ become a constant thus it is independent to $\delta_1$ and $\delta_2$.
Thus we have:
\begin{align*}
    &\var\left[\what(s) \left(\betar(a|s)(r + \gamma \vhat(s')) - \vhat(s)\right)\right] \\
    =& \var\left[\what(s) \left( -\varepsilon_{\vhat}(s) + \delta_1(s,a) + \gamma \delta_2(s,a,s') \right)\right] \\
    =& \var\left[\what(s) \varepsilon_{\vhat}(s) \right] + \E\left[\what(s)^2 \left(\delta_1(s,a) + \gamma \delta_2(s,a,s')\right)^2\right]
\end{align*}

Therefore we have:
\begin{equation*}
    \var\left[\Rdr[\vhat, \what]\right] = \frac{(1-\gamma)^2}{n_0} \var[\vhat(s_0)] + \frac{1}{n}(\var\left[\what(s) \varepsilon_{\vhat}(s) \right] + \E\left[\what(s)^2 \left(\delta_1(s,a) + \gamma \delta_2(s,a,s')\right)^2\right]).
\end{equation*}

For $\var\left[\Rsis[\what]\right]$ we have:
\begin{align*}
    \var\left[\Rsis[\what]\right] =& \frac{1}{n}\var\left[\what(s) \betar(a|s) r(s,a)\right] \\
    =& \frac{1}{n} \var\left[\what(s) r^\pi(s) + \what(s) \delta_1(s,a)\right] \\
    =& \frac{1}{n} (\var\left[\what(s) r^\pi(s)\right] + \E\left[\what^2(s) \delta_1(s,a)^2\right]). 
\end{align*}
\end{proof}

From the theorem we can see that the variance of residual comes from two parts, the majority part relies on the variance of $|\varepsilon_{\vhat}(s)|$ is usually much smaller than $r^\pi$ as the majority variance of state visitation importance sampling.

\subsection{Proof of Theorem \ref{thm:primal-dual}}
\begin{proof}
The Lagrangian can be written as:
\begin{align*}
    L(V,\rho) =& (1-\gamma) \sum_{s} \mu_0(s) V(s) - \sum_{s} \rho(s) \left(V(s) - r^\pi(s) - \gamma \Ppi V(s)\right) \\
    =&  \underbrace{\sum_{s} (1-\gamma)\mu_0(s) V(s)}_{=\Rvali[V]} - \underbrace{\sum_s \rho(s) \left(I-\gamma \Ppi\right) V(s)}_{=\Rconni[V,w_{\rho/\pi_0}]} + \underbrace{\sum_s \rho(s) r^\pi(s)}_{=\Rsisi[w_{\rho/\pi_0}]} \\
    =&  \sum_{s} (1-\gamma)\mu_0(s) V(s) - \sum_s \left(I-\gamma \Tpi\right)\rho(s)  V(s) + \sum_s \rho(s) r^\pi(s) \\
    =& \sum_s \left((1-\gamma)\mu_0(s) - (I-\gamma \Tpi)\rho(s)\right) V(s) + \sum_s \rho(s) r^\pi(s).
\end{align*}
We can see that the Lagrangian $L(V,\rho)$ is actually our doubly robust estimator $\Rdri[V, w_{\rho/\pi_0}]$.

From the last equation we can derive our dual as:
\begin{align*}
    \max_{\rho\geq 0} ~~~&\sum_s \rho(s) r^\pi(s) \\
    \text{s.t. }~~& \rho(s) = (1-\gamma) \mu_0(s) + \gamma \Tpi \rho(s),~\forall s.
\end{align*}
\end{proof}

\clearpage
\section{Doubly Robust Estimator for Average Case}
\label{sec:avg_prime}
\subsection{Primal Dual Framework}
We start from primal dual framework to get our doubly robust estimator similar to section \ref{sec:duality}.
To estimate the average reward of a given policy $\pi$, we can consider solve the following linear programming:
\begin{align}
    \max_{\rho \geq 0}& \sum_{s}\rho(s)r^{\pi}(s) \notag \\ 
    \rm{s.t.} & ~~\sum_{s} \rho(s) = 1, ~~~\rho(s) = \Tpi\rho(s),~\forall s, \label{equ:avg_ratio}
\end{align}
where $\rho(s)$ is the stationary distribution of states under $\Ppi$, and the objective is the average reward given $\pi$.

Consider the Lagrangian of above linear programming:
\begin{align}
    L(V, \rho, \bar{v}) &= \sum_{s}\rho(s)r^{\pi}(s) - \sum_{s}V(s)(\rho(s) - \Tpi\rho(s)) - \bar{v}(\sum_{s}\rho(s) - 1) \notag \\
    & = \sum_{s}\rho(s)(r^{\pi}(s) - V(s) - \Ppi V(s) - \bar{v}) + \bar{v}. \label{equ:avg_lag}
\end{align}

From Equation \eqref{equ:avg_lag} we can get the dual formula as:
\begin{align}
    \min_{V, \bar{v}}&~~\bar{v} \notag \\ 
    \rm{s.t.} & ~~~\bar{v} + V(s) - \Ppi V(s) - r^{\pi}(s) \geq 0, ~~\forall s,
\end{align}
where $V(s)$ is the value function and $\bar{v}$ is the average reward we want to optimize.

Notice that in average case, $V^\pi(s)$ can be viewed as the fixed-point solution to the following Bellman equation:
$$
V^\pi(s) - \E_{s^\prime, a | s \sim d_{\pi}}[V^\pi(s^\prime)] = \E_{a|s  \sim \pi}[r(s, a) - \bar{v}].
$$
Note that this explains the constraint and only if we pick $\bar v = R^\pi$, we can find a $V$ to guarantee the constraint $\bar{v} + V(s) - \Ppi V(s) - r^{\pi}(s) \geq 0$ holds true.

\subsection{Doubly Robust Estimator}
We want to build the doubly robust estimator via the lagrangian.
However, the Lagrangian consist of three term $\rho, V$ and $\bar v$.
It is counter-intuitive if we already given an estimator of $\bar v \approx R^\pi$ into our estimator.

A better way to solve this problem is to remove the constraint $\sum \rho(s) = 1$, but we divide it as an self-normalization.
Then our Lagrangian becomes
$$
L(V,\rho) = \frac{\sum_{s}\rho(s)r^{\pi}(s) - \sum_{s}V(s)(\rho(s) - \Tpi\rho(s))}{\sum \rho(s)}.
$$
which can be utilized to define the doubly robust estimator for average reward.
\begin{mydef} Given a learned value function ${\hat{V}}(s)$ for policy $\pi$ and an estimated density ratio $\hat{w}(s)$ for $\rnd(s)$, we define
$$
    \Rdr[\vhat, \what] := \frac{\sum_{s,a,r,s'\in \D}\what(s) \left(\betar(a|s) (r+ \vhat(s')) - \vhat(s) \right)}{\sum_{s\in \D}\what(s)},
$$
where $\betar(a|s) = \frac{\pi(a|s)}{\pi_0(a|s)}$.
\end{mydef}

Similarly to Theorem \ref{thm:biased} we will have our double robustness for our average doubly robust estimator:
\begin{thm}
Suppose we have infinite samples and we can get
$$
\Rdri[\vhat, \what] = \frac{\E_{s\sim d_{\pi_0}}\left[\what(s)\left( r_\pi(s) - \vhat(s) + \Ppi \vhat(s) \right)\right]}{\E_{s\sim d_{\pi_0}}\left[\what(s)\right]}.
$$
Then we have
\begin{equation}
    \Rdri[\vhat, \what] - \Rpi =
    \E_{s\sim d_{\pi_0}}\left[\varepsilon_{\what}(s)\varepsilon_{\vhat}(s)\right],  
\end{equation}
where $\varepsilon_{\vhat} $ and $\varepsilon_{\what}$ are errors of $\vhat$ and $\what$, respective, defined as follows 
\begin{align*} 
\varepsilon_{\what} =\frac{\what(s)}{\E_{s\sim d_{\pi_0}}\left[\what(s)\right]} - \frac{d_\pi(s)}{d_{\pi_0}(s)}, &&
\varepsilon_{\vhat}(s) = r_\pi - \vhat + \Ppi \vhat - \Rpi. 
\end{align*}
\end{thm}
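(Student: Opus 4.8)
The plan is to follow the template of the proof of Theorem~\ref{thm:biased}, adding two ingredients specific to the average-reward case: the self-normalization by $\bar w := \E_{s\sim d_{\pi_0}}[\what(s)]$, and the fact that here the relevant density $d_\pi$ is the \emph{stationary} distribution, satisfying $d_\pi = \Tpi d_\pi$ and $\sum_s d_\pi(s)=1$ (the constraints in \eqref{equ:avg_ratio}). Throughout I would write $\tilde d_{\what}(s) := d_{\pi_0}(s)\what(s)/\bar w$, which is a genuine probability distribution since $\sum_s d_{\pi_0}(s)\what(s) = \bar w$, so that the assumed expression reads $\Rdri[\vhat,\what] = \sum_s \tilde d_{\what}(s)\big(r^\pi(s) - \vhat(s) + \Ppi\vhat(s)\big)$.

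First I would put the Bellman residual into operator form. Since $V^\pi$ is the fixed point of the average-case Bellman equation $(I-\Ppi)V^\pi = r^\pi - \Rpi$, substituting $r^\pi = (I-\Ppi)V^\pi + \Rpi$ into the definition of $\varepsilon_{\vhat}$ gives $\varepsilon_{\vhat} = (I-\Ppi)(V^\pi-\vhat)$, and into the expression for $\Rdri$ gives, after using $\sum_s \tilde d_{\what}(s)=1$ to peel off the constant,
\begin{equation*}
\Rdri[\vhat,\what] - \Rpi = \sum_s \tilde d_{\what}(s)\,\varepsilon_{\vhat}(s).
\end{equation*}

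Next I would expand the claimed right-hand side. From the definition of $\varepsilon_{\what}$ one has $d_{\pi_0}(s)\,\varepsilon_{\what}(s) = d_{\pi_0}(s)\what(s)/\bar w - d_\pi(s) = \tilde d_{\what}(s) - d_\pi(s)$, hence $\E_{s\sim d_{\pi_0}}[\varepsilon_{\what}(s)\varepsilon_{\vhat}(s)] = \sum_s \tilde d_{\what}(s)\varepsilon_{\vhat}(s) - \sum_s d_\pi(s)\varepsilon_{\vhat}(s)$. Comparing with the previous display, it remains to show $\sum_s d_\pi(s)\varepsilon_{\vhat}(s)=0$. Using $\varepsilon_{\vhat} = (I-\Ppi)(V^\pi-\vhat)$ and the adjoint identity of Lemma~\ref{lem:adjoint},
\begin{equation*}
\sum_s d_\pi(s)\,(I-\Ppi)(V^\pi-\vhat)(s) = \sum_s \big[(I-\Tpi)d_\pi\big](s)\,(V^\pi-\vhat)(s) = 0,
\end{equation*}
since $(I-\Tpi)d_\pi=0$ by stationarity. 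Combining the three displays yields the claim.

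The only genuinely delicate point is the bookkeeping around self-normalization: one must carry the $\bar w$ factor through every step and check $\sum_s\tilde d_{\what}(s)=1$ so that the additive $\Rpi$ term cancels cleanly, and one must assume the sums converge absolutely so that Lemma~\ref{lem:adjoint} applies — both handled exactly as in the discounted proof of Theorem~\ref{thm:biased}. No machinery beyond the discounted argument and the stationarity relation $d_\pi=\Tpi d_\pi$ is needed.
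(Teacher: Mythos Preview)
Your proposal is correct and follows essentially the same three-step structure as the paper's proof: (i) use $\sum_s \tilde d_{\what}(s)=1$ to reduce $\Rdri-\Rpi$ to $\sum_s \tilde d_{\what}(s)\,\varepsilon_{\vhat}(s)$, (ii) show $\sum_s d_\pi(s)\,\varepsilon_{\vhat}(s)=0$, and (iii) subtract. The only cosmetic difference is in step (ii): the paper avoids introducing $V^\pi$ altogether and just splits $\E_{s\sim d_\pi}[\varepsilon_{\vhat}(s)]$ into $\E_{s\sim d_\pi}[r^\pi(s)-\Rpi]=0$ (by definition of $\Rpi$) and $\E_{s\sim d_\pi}[-\vhat(s)+\Ppi\vhat(s)]=0$ (stationarity plus adjoint), whereas you pass through the average-case Bellman identity to write $\varepsilon_{\vhat}=(I-\Ppi)(V^\pi-\vhat)$ before invoking Lemma~\ref{lem:adjoint}. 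Both routes are valid; the paper's is marginally more direct since it does not require well-definedness of $V^\pi$.
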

\begin{proof}
A key observation is that
\begin{align*}
    \E_{s\sim d_{\pi_0}}[\rnd(s) \varepsilon_{\vhat}(s)] =& \E_{s\sim d_\pi}[r^\pi(s) - \vhat(s) + \Ppi \what(s) - R^\pi] \\
    =& \left(\E_{s\sim d_\pi}[r^\pi(s)]-\Rpi\right)  + \E_{s\sim d_\pi}[- \vhat(s) + \Ppi \what(s)] \\
    =& 0
\end{align*}
Thus we have
\begin{align*}
    \Rdri[\vhat, \what] - \Rpi =& \E_{s\sim d_{\pi_0}}\left[\frac{\what(s)}{\E_{s\sim d_{\pi_0}}\left[\what(s)\right]} \left(R^\pi + \varepsilon_{\vhat}(s)\right)\right] - \Rpi\\
    =& \E_{s\sim d_{\pi_0}}\left[\frac{\what(s)}{\E_{s\sim d_{\pi_0}}\left[\what(s)\right]}\varepsilon_{\vhat}(s)\right] \\
    =& \E_{s\sim d_{\pi_0}}\left[\frac{\what(s)}{\E_{s\sim d_{\pi_0}}\left[\what(s)\right]}\varepsilon_{\vhat}(s)\right] - \E_{s\sim d_{\pi_0}}[\rnd(s) \varepsilon_{\vhat}(s)] \\
    =& \E_{s\sim d_{\pi_0}}\left[\varepsilon_{\what}(s)\varepsilon_{\vhat}(s)\right].
\end{align*}
\end{proof}

Similar to discounted case we have $\Rdri[\vhat, \what] = R^{\pi}$ if either $\what$ or $\vhat$ is accurate.
\clearpage
\section{Experimental Details}
\label{sec:exp}
\subsection{Tabular Case: Taxi}
\label{sec:taxi}
\paragraph{Behavior and Target Policies Choosing}
We use an on-policy Q-learning to get a sequence of policy $\pi_0, \pi_1,...,\pi_{19}$ as data size increases.
We pick the last policy $\pi_{19}$ (almost optimum) as our target policy and $\pi_{18}$ as our behavior policy to guarantee that those policies are not far away.
We set our discounted factor $\gamma = 0.99$.

\paragraph{Train $\vhat$ and $\widehat{\rho}$}
Separate from testing, we use a set of independent sample to first 
train a value function $\vhat$ and a density function $\widehat{\rho}$.
Both $\vhat$ and $\widehat{\rho}$ have bias due to finite sample approximation.

\newcommand{\qhat}{\widehat{Q}^\pi}
\newcommand{\rhohat}{\widehat{\rho}}
\newcommand{\muhat}{\widehat{\mu}}
For how to train $\vhat$ and $\widehat{\rho}$,
we choose to use Monte Carlo method to estimate $\vhat$ and $\widehat{\rho}$.
We first use the finite samples to get an estimated model $\widehat{T}(s'|s,a)$ and rewards function $\widehat{r}(s,a)$ and $\widehat{d}_0$.
Then we solve the following linear equation (by iteration like power method, which is actually Monte Carlo):

\begin{align*}
    \vhat(s) &= \sum_a \pi(a|s) \qhat(s,a), \\
    \qhat(s,a) &= \widehat{r}(s,a) + \gamma \sum_{s'} \widehat{T}(s'|s,a) \vhat(s'), \\
    \muhat(s,a) &= \rhohat(s) \pi(a|s), \\
    \rhohat(s) &= (1-\gamma) \widehat{d}_0(s) + \gamma \sum_{s,a}\widehat{T}(s'|s,a) \muhat(s,a).
\end{align*}

\paragraph{Estimate $\Rpi$ Using $\vhat$ and $\rhohat$}
We put $\vhat$ and $\widehat{\rho}$ into the Lagrangian as equation \eqref{eqn:lagrangian} as our doubly robust estimator.
For those states we haven't visited, we set $\vhat(s)$ and $\widehat{\rho}(s)$ as $0$ and we self-normalized the $\widehat{\rho}$ to get a fair estimation.

\subsection{Continuous States Off-Policy Evaluation}
\paragraph{Evaluation Environments}
We evaluate our method on two infinite horizon environments: Puck-Mountain and InvertedPendulum. 

Puck-Mountain is an environment similar to Mountain-car, except that the goal of the task is to push the puck as high as possible in a local valley whose initial position is at the bottom of the valley.
If the ball reaches the top sides of the valley, 
it will hit a roof and changes the speed to its opposite direction with half of its original speeds. The rewards was determined by the current velocity and height of the puck.

InvertedPendulum is a pendulum that has its center of mass above its pivot point. It is unstable and without additional help will fall over.
We train a near optimal policy that can make the pendulum balance for a long horizon.
For both behavior and target policies, we assume they are good enough to keep the pendulum balance and will never fall down until they reach the maximum timesteps.
We use the implementation from OpenAI Gym \citep{brockman2016openai} and changing the dynamic by adding some additional zero mean Gaussian noise to the transition dynamic.

\label{sec:exp_continuous}
\paragraph{Behavior and Target Policies Learning}
We use the open source implementation\footnote{https://github.com/openai/baselines} of deep Q-learning to train a $32 \times 32$ MLP parameterized Q-function to converge.
We then use the softmax policy of learned the Q-function as the target policy $\pi$, which has a default temperature $\tau = 1$.
For the behavior policy $\pi_{0}$, we set a relative large temperature which encourages exploration.
We set the temperature of the behavior policy $\tau_0 = 1.88$ for Puck-Mountain and $\tau_0 = 1.50$ for InvertedPendulum respectively.
\paragraph{Training of density ratio $\hat{w}(s)$ and value function $\hat{V}(s)$}
We use a seperate training dataset with 200 trajectories whose horizon length is 1000 to learn the density ratio $\hat{w}(s)$ and the value function $\hat{V}(s)$.
For the training of density ratio, we adapt the algorithm 2 in \citet{liu2018breaking} to train a neural network parameterized $w_{\theta}(s)$. 
Instead of taking the test function $f(s)$ into an RKHS $\mathcal{H}_{\mathcal{K}}$, we parameterize the test function $f(s) = f_{\beta}(s)$ to be a neural network with parameter $\beta$, and perform minimax optimization over parametr $\theta$ and $\beta$. A detail description can be found in Algorithm \ref{alg:discounted}.

Similarly, for the training of value function, we use primal-dual based optimization methods \citep{dai2017sbeed,feng2019kernel} to minimize the bellman residual:
$$
    \min_{\phi}\max_{f_{\beta}\in \mathcal{F}}\frac{1}{|\mathcal{M}|}\sum_{i \in \mathcal{M}}\left(\left(V_{\phi}(s_i) - \frac{\pi(a_i| s_i)}{\pi_{0}(a_i|s_i)}(r_i + \gamma V_{\phi}(s_i^{\prime}))\right)f_{\beta}(s_i) - \frac{1}{2}f_{\beta}(s_i)^2\right),
$$
where $V_{\phi}(s)$ is the parameterized value function and $f_{\beta}(s)$ is the test function.
We also perform minimax over parameter $\phi$ and $\beta$. A detail description can be found in Algorithm \ref{alg:dis_val}.

For the network structures, we use $32\times32$ feed forward neural networks to parameterize value function $V_{\phi}$ and density ratio $w_\theta(s)$, and we use one hidden neural network with 10 units to parameterize the test function
$f_{\beta}(s)$. We use Adam Optimizer for all our experiments.

\paragraph{Estimate $R^{\pi}$ using $\vhat$ and $\what$} Given data samples from the policy $\pi_{0}$, 
We can directly use $\Rdr$ in equation \eqref{eqn:doubly_robust} to estimate $\Rpi$.

\begin{algorithm}[t] 
\caption{Optimization of density ratio $\what$}  
\label{alg:discounted}
\begin{algorithmic} 
\STATE {\bf Input}: Transition data $\mathcal D = \{s_i, a_i, s'_i, r_i\}_{i=1}^n$ from the behavior policy  $\pi_0$; a target policy $\pi$ for which we want to estimate the expected reward. Discount factor $\gamma\in(0,1)$, starting state $\D_0 = \{s^{(0)}_j\}_{j=1}^m$ from initial distribution.
\STATE {\bf Initial} the density ratio $w(s) = w_\theta(s)$ to be a neural network parameterized by $\theta$, $f(s) = f_{\beta}(s)$ to be a neural network parameterized by $\beta$. We need to ensure that the final layer of $\theta$ is a softmax layer.
\FOR{iteration = 1,2,...,T} 
\STATE Randomly choose a batch $\mathcal M\subseteq \{1,\ldots,n\}$ uniformly from the transition data $\D$ and a batch $\mathcal M_0 \subseteq \{1,\ldots,m\}$ uniformly from start states $\D_0$.
\FOR{iteration = 1,2,..., K}
\STATE {\bf Update} the parameter $\beta$ by $\beta \gets \beta + \epsilon_\beta \nabla_{\beta} \hat{L}(w_\theta, \f_\beta)$, where 
$$
\hat{L}(w_\theta, f_\beta) = \frac{1}{|\mathcal M|} \sum_{i \in \mathcal M} \left((w(s_i') - \gamma w(s_i)\frac{\pi(a_i|s_i)}{\pi_0(a_i|s_i)} - \frac{1}{2}f(s_i'))f(s'_i) \right) - (1-\gamma) \frac{1}{|\mathcal M_0|} \sum_{j\in \mathcal M_0} f(s_j) 
$$
\ENDFOR
\STATE {\bf Update} the  parameter $\theta$ by $\theta \gets \theta - \epsilon_{\theta}\nabla_\theta \hat{L}(w_\theta, f_\beta)$.
\ENDFOR
\STATE {\bf Output}: the density ratio $\what = w_{\theta}$.
\end{algorithmic} 
\end{algorithm}

\begin{algorithm}[t] 
\caption{Optimization of value function $\vhat$}  
\label{alg:dis_val}
\begin{algorithmic} 
\STATE {\bf Input}: Transition data $\mathcal D = \{s_i, a_i, s'_i, r_i\}_{i=1}^n$ from the behavior policy  $\pi_0$; a target policy $\pi$ for which we want to estimate the expected reward. Discount factor $\gamma\in(0,1)$. 
\STATE {\bf Initial} the value function $V(s) = V_\phi(s)$ to be a neural network parameterized by $\phi$, $f(s) = f_{\beta}(s)$ to be a neural network parameterized by $\beta$.
\STATE 
\FOR{iteration = 1,2,...,T} 
\STATE Randomly choose a batch $\mathcal M\subseteq \{1,\ldots,n\}$ uniformly from the transition data $\D$.
\FOR{iteration = 1,2,..., K}
\STATE {\bf Update} the parameter $\beta$ by $\beta \gets \beta + \epsilon_\beta \nabla_{\beta} \hat{L}(V_\phi, \f_\beta)$, where 
$$
\hat{L}(V_\phi, f_\beta) = \frac{1}{|\mathcal{M}|}\sum_{i \in \mathcal{M}}\left(\left(V_{\phi}(s_i) - \frac{\pi(a_i| s_i)}{\pi_{0}(a_i|s_i)}(r_i + \gamma V_{\phi}(s_i^{\prime}))\right)f_{\beta}(s_i) - \frac{1}{2}f_{\beta}(s_i)^2\right)
$$
\ENDFOR
\STATE {\bf Update} the  parameter $\phi$ by $\phi \gets \phi - \epsilon_{\phi}\nabla_\phi \hat{L}(V_\phi, f_\beta)$.
\ENDFOR
\STATE {\bf Output}: the density ratio $\vhat = V_{\phi}$.
\end{algorithmic} 
\end{algorithm}

\end{document}